\documentclass{article}
\usepackage{arxiv}
\usepackage[utf8]{inputenc} 
\usepackage[T1]{fontenc}    
\usepackage{amsthm}

\title{How good is PAC-Bayes at explaining generalisation?}

\usepackage{graphicx}
\usepackage{xcolor}
\usepackage{url}

\usepackage{booktabs}
\usepackage{amssymb}
\usepackage{amsmath}
\usepackage{amsfonts}
\usepackage{mathrsfs}
\usepackage{bbm}
\usepackage{natbib}
\usepackage{comment}
\usepackage{placeins}

\newtheorem{theorem}{Theorem}
\newtheorem{lemma}{Lemma}
\newtheorem{definition}{Definition}
\newtheorem{corollary}{Corollary}

\author{
  Antoine Picard-Weibel\\
  Inria and SUEZ\\
  Le Pecq, France\\
  \texttt{apicard.w@gmail.com}
    \And
  Eugenio Clerico\\
  Universitat Pompeu Fabra\\Barcelona, Spain\\
  \texttt{eugenio.clerico@gmail.com}
    \And
  Roman Moscoviz\\
  SUEZ\\Narbonne, France
    \And
  Benjamin Guedj\\
  Inria and UCL\\France and UK\\
}

\usepackage{cleveref}
\newcommand{\ADDREF}{\textcolor{red}{REF TO ADD}}

\newcommand{\R}{\mathbb{R}}

\newcommand{\norm}[1]{\left\lVert#1\right\rVert}
\newcommand{\pth}[1]{\left( #1 \right) }
\newcommand{\bk}[1]{\left[ #1 \right]}
\newcommand{\curl}[1]{\left\{#1\right\}}

\newcommand{\Id}{\textup{Id}}

\newcommand{\PB}{B}

\newcommand{\risk}{R}
\newcommand{\trisk}{\tilde{\risk}}

\newcommand{\proba}{\pi}
\newcommand{\prior}{\proba_{\textup{p}}}

\newcommand{\KL}{\text{KL}}

\newcommand{\param}{\gamma}





\newcommand{\A}{\mathcal A}
\newcommand{\dd}{\mathrm d}

\newcommand{\leqst}{\leq_{\mathrm{stoch}}}

\newcommand{\PBt}{b}
\newcommand{\Bb}{\mathcal B}
\newcommand{\F}{\mathcal F}
\newcommand{\D}{\mathcal D}
\newcommand{\PBmin}{\PB^{\min}_{D, \PBt}}

\newtheorem{assump}{Assumption}

\crefname{assump}{Assumptions}{assumptions}
\crefname{corollary}{Corollary}{corol.}
\begin{document}

\maketitle
\begin{abstract}
We discuss necessary conditions for a PAC-Bayes bound to provide a meaningful generalisation guarantee. Our analysis reveals that the optimal generalisation guarantee depends solely on the distribution of the risk induced by the prior distribution. In particular, achieving a target generalisation level is only achievable if the prior places sufficient mass on high-performing predictors. We relate these requirements to the prevalent practice of using data-dependent priors in deep learning PAC-Bayes applications, and discuss the implications for the claim that PAC-Bayes ``explains'' generalisation.
\end{abstract}


\section{Introduction}
The widespread use of modern neural networks for high-stakes applications requires safety guarantees on their performance on future data, which have not been observed during the training \citep{Xu2018, russll2020}. A well-established approach to train and evaluate the performance of a predictor consists of the following steps. First, the available data are split in a train and a test datasets. The training data are used to construct the predictor, whose performance is then assessed on the test data (\emph{empirical test risk}). Finally, concentration inequalities \citep{boucheron2013concentration} are used to derive, from this finite-sample test, an upper bound on the model expected performance over the data distribution (\emph{population risk}) \citep{Langford2005a}. In the case of i.i.d.~data, this approach yields perfectly valid high-probability bounds on the performance of the learner on future data. Yet, this comes at the cost of sacrificing a fraction of the data when training. To obtain tight bounds on the population risk, this fraction cannot be too small: for $n_\textup{test}$ samples reserved for the test set, tight concentration inequalities will have deviations of order $n_\textup{test}^{-1}$ when the test error is almost $0$, and $n_\textup{test}^{-1/2}$ in the general case. In practice, 10 to 35 percent of the data is typically reserved for testing \citep{Gholamy2018}.

Various strategies have been developed to provide safety guarantees on a model performance, without the need to set aside any data at training time. These approaches typically aim to establish a \emph{generalisation bound}, namely an upper bound on the gap between empirical risk on training data and population risk. Classical complexity notions, such as the VC dimension \citep{Vapnik2000, shalev2014understanding}, consider the overall overfitting susceptibility of an entire class of predictors and allow for deriving high-probability non-asymptotic bounds that apply universally to all predictors (and thus any algorithm) in the considered class. However, for high-dimensional problems, these bounds are often loose and fail to provide meaningful guarantees for modern neural architectures. In contrast, PAC-Bayes theory has gained significant attention after yielding non-vacuous empirical bounds for over-parametrised classifiers on standard image recognition benchmarks such as MNIST, all the while using the entirety of the available data for training the model \citep{guedj2019primer, alquier2024user}. The essence of the PAC-Bayes strategy is to share the generalisation abilities of a fixed predictor to all other predictors through penalisation. This is achieved by considering a randomised predictor (called \emph{prior}, and usually required to be data-agnostic) as a reference, and then comparing it with the final predictor (a.k.a.~\emph{posterior}), learned from the dataset.

The first instance of PAC-Bayes' ability to establish non-vacuous generalisation guarantees for deep neural networks was presented by \cite{dziugaite2017a}, who relied on a data-dependent prior selected using a union bound argument. This rekindled interest in the use of data-dependent priors to tighten PAC-Bayes bounds, an idea that had been previously proposed by \cite{ambroladze2006} and \cite{parrado2012}. \cite{dziugaite2021a} considered splitting the data\footnote{We note however that both datasets in which the data are split will be used for the training, differently from what previously done when splitting into train and test datasets.} into a \emph{prior training} set and a \emph{posterior training} set (an idea already present in \citealp{parrado2012}), and reported improved PAC-Bayes bound. Tighter results were lately achieved by \cite{perez-ortiz2021a} and further improved by \cite{clerico22a}, always leveraging a data-splitting strategy for their tightest generalisation guarantees on MNIST and CIFAR datasets. These results strongly hint that a ``good'' prior is necessary if a ``good'' posterior is to be obtained. 

Despite the interest that these results have raised
, it is often overlooked that these generalisation bounds usually closely match the guarantees that one would obtain by only training the model on the prior dataset, using the posterior dataset as a testing dataset (cf.~\Cref{table:test_bounds_vs_PACBayes} in Appendix \ref{appendix:test_bounds}).
To improve generalisation guarantees, the prior constructed using the first part of the data must generalise on the second part of the data. Moreover, constructing such priors often involves a choice of variance, which, if low (as needed for tight guarantees), essentially forces the posterior to remain approximately equal to the prior. In such experiments, the PAC-Bayes approach appears an unduly complex way to get test bounds (\emph{almost no learning at the posterior level}), and not even the tightest ones.



This casts doubt on whether such PAC-Bayes methods truly deepen our understanding of the learning mechanisms, a central goal of generalisation theory. In the present work, we aim at shading light on this issue, by formalising and quantifying the burden on the prior necessary to obtain good generalisation guarantees. For a wide range of existing PAC-Bayes bounds, we show that the generalisation guarantee depends solely on the prior distribution on the empirical risk, and does not take benefit from any factor bearing insights on the generalisation power of the predictor space. For such bounds, to reach a reasonably tight generalisation level, the prior distribution must put a non-negligible amount of mass to low risk predictors. Such findings advocate systematic integration of further theoretical insights on the generalisation potential of predictors, in order for PAC-Bayes to truly explain generalisation.




\paragraph{Structure.} \Cref{sec:setting} defines the main notations we use and describes the learning framework that we consider. \Cref{sec:PACB} introduces PAC-Bayes bounds. \Cref{sec:impact} presents our main contribution, a framework to derive quantitative necessary conditions on the prior to achieve tight PAC-Bayes generalisation bounds. \Cref{sec:Cat_bound_prior_requirement} applies this framework to a bound from \cite{Catoni2004b}. \Cref{sec:PB_deep_learning_prior} discusses some implications of our results on the meaningfulness of PAC-Bayes for explaining deep learning generalisation. Finally, \Cref{sec:conc} addresses some limitation and outlines possible directions of future work. All omitted proofs from the main text are detailed in the appendix.

\section{Notation and setting}\label{sec:setting}
\paragraph{Notation.}
We always endow $\R$ (and its Borel subsets) with the standard Borel sigma-field. Let $(\mathcal A, \Sigma_{\mathcal A})$ be a measurable space, and denote as $\Pi_\mathcal{A}$ the set of all probability measures on it. For $\pi\in\Pi_\A$, we write $a\sim\pi$ when a random variable $a$ is distributed according to $\pi$. We let $\mathcal F_\A^+$ denote the space of non-negative measurable functions from $\A$ to $\R^+=[0,+\infty)$. For a function $f\in\mathcal F_\A^+$, we let $\pi[f(a)]$ (often shortened as $\pi[f]$, when no confusion arises) be the expectation of $f(a)$ under $a\sim\pi$ (which might be infinite). The conditional expectation of a random variable (or function) $X$ with respect to another random variable $Y$ is noted $\pi\bk{X\mid Y}$. For $A\in\Sigma_\A$, we denote its probability under $\pi$ as $\pi\pth{A}$. For two measures $\pi_1$, $\pi_2 \in\Pi_{\mathcal{A}}$, we say that $\pi_1\ll\pi_2$ (in words, $\pi_1$ is absolutely continuous with respect to $\pi_2$) if $\pi_1\pth{A} = 0$ for every $A\in\Sigma_\A$ such that $\pi_2\pth{A}=0$. When $\pi_1\ll\pi_2$, we denote as $\frac{\mathrm{d}\pi_1}{\mathrm{d}\pi_2}$ the Radon-Nikodym derivative (a.k.a.~density) of $\pi_1$ with respect to $\pi_2$. Given $g:\A\to\R$ (measurable) and $\pi\in\Pi_\A$, we let $\pi^{\#g}$ denote the push-forward on $\R$ of $\pi$ through $g$, namely the probability distribution on $\R$ that is the law of $g(a)$ for $a\sim\pi$. 

When $\A$ is $\R^+$, we introduce the notion of stochastic (partial) ordering on $\Pi_{[0,1]}$. Specifically, for $\pi_1$ and $\pi_2$ in $\Pi_{\R_+}$, we write $\pi_1\leq_{\mathrm{stoch}}\pi_2$ if, for all $x\in\R_+$, $\pi_1\pth{[x,\infty)}\leq \pi_2\pth{[x,\infty)}$ (equivalently, the cumulative distribution function of $\pi_1$ dominates everywhere that of $\pi_2$).

Finally, we recall that the relative entropy $\KL$ between $\pi_1,\pi_2\in\Pi_\A$ is defined as $\KL(\pi_1,\pi_2)= \pi_2\bk{\frac{\dd\pi_1}{\dd\pi_2}\log\frac{\dd\pi_1}{\dd\pi_2}}$ if $\pi_1\ll\pi_2$, and $+\infty$ otherwise. We note that the relative entropy is always non-negative, and null only if $\pi_1=\pi_2$. 

\paragraph{Setting.}
We let $(\mathcal S, \Sigma_\mathcal S)$ denote a generic measurable space,  and we fix a data-generating probability measure $\mathbb P\in\Pi_\mathcal S$. We assume that the dataset $s$ used to train the model of interest lies in $\mathcal S$ and is sampled from $\mathbb P$, namely $s\sim\mathbb P$.  

We consider a measurable space $(\Gamma, \Sigma_\Gamma)$, which we call \emph{predictor space} $\Gamma$. To evaluate how good a predictor is, we introduce a measurable\footnote{Measurability is here w.r.t.~the product sigma-field on $\Gamma\times\mathcal S$.} function $\ell:\Gamma\times\mathcal S\to\R^+$, which can be thought of as a way to account for how many and how severe mistakes the predictor makes. As we will always consider a dataset $s\sim\mathbb P$, one can see $\gamma\mapsto \ell(\gamma, s)$ as a random field indexed on the predictor space. 
For convenience, we will just write $\risk(\gamma)$ for $\ell(\gamma, s)$, making the dependence on the observed data implicit. We call such $R$ the \emph{empirical risk function}. The \emph{population risk function} $\trisk$ is defined as the point-wise average (under $s\sim\mathbb P$) of the empirical risk $\risk$, namely $\trisk(\gamma) = \mathbb P\bk{R(\gamma)} = \mathbb P\bk{\ell(\gamma, s)}$. 

For some specific instances of our results, we will focus on a more restrictive standard statistical learning framework, which we refer to as the \emph{$n$-i.i.d.~bounded setting}, where $n\geq 1$ is an integer. In this context, we assume the existence of a measurable space $(\mathcal Z, \Sigma_\mathcal Z)$ such that $\mathcal S = \mathcal Z^n$, and $\Sigma_\mathcal S=\Sigma_\mathcal Z^{\otimes n}$ is the product sigma-field. Additionally, we require the existence of a measure $\hat{\mathbb P}\in\Pi_\mathcal Z$ such that $\mathbb P = \hat{\mathbb P}^{\otimes n}$. Finally, we ask that there is a function $\hat\ell:\Gamma\times\mathcal Z\to[0,1]$ such that, for any $\gamma\in\Gamma$, $\ell(\gamma,s) = \frac{1}{n}\sum_{i=1}^n\hat\ell(\gamma, z_i)$, where $s = (z_1,\dots,z_n)$. However, let us emphasise that the majority of our theory holds without requiring this \emph{$n$-i.i.d.~bounded setting}, which will only be assumed when explicitly stated.

\section{PAC-Bayes generalisation bounds}\label{sec:PACB}
We now give an ``abstract'' definition of PAC-Bayes bound for the general learning setting that we described in \Cref{sec:setting}, and later comment on it to show how it matches with standard instantiations in the literature. For any predictor space $\Gamma$, we define the function space $\Bb_\Gamma$, as the space of all functions ${\Pi_\Gamma}^2\times\F_\Gamma^+\to\R$.
\begin{definition}\label{def:PBbound}
    Fix a predictor space $\Gamma$ and an empirical risk function $R$. $B\in\Bb_\Gamma$ is a \emph{PAC-Bayes bound} (at confidence level $\delta\in(0,1)$), if the set $\big\{\forall\pi\in\Pi_\Gamma\,,\,\pi[\tilde R]\leq B(\pi,\prior, R)\big\}$ is $\Sigma_\mathcal S$-measurable, and 
    \begin{equation}\label{eq:PAC_Bayes_bound}\mathbb P \left(\forall\pi\in\Pi_\Gamma\,,\,\pi[\tilde R]\leq B(\pi,\prior, R)\right) \geq 1-\delta\,.\end{equation}
\end{definition}
We remark that in the above definition we used the fact that $R$ is a random element of $\F_\Gamma^+$, the randomness coming from the implicit dependence on $s\sim \mathbb P$.


We note that \eqref{eq:PAC_Bayes_bound} has the typical form of a \emph{generalisation bound}, namely a high-probability upper bound on the (expected) population risk function of a randomised predictor, in terms of the (expected) empirical risk. The distribution $\prior$ is typically called \emph{prior} in the PAC-Bayes literature, and the bound holds uniformly over all the \emph{posterior} distributions $\pi\in\Pi_\Gamma$. 

Well known instances of PAC-Bayes bounds in the form \eqref{eq:PAC_Bayes_bound} have been established in the setting where the empirical risk function is an average of $n$ bounded, independent losses (what we have previously called the $n$-i.i.d.~bounded setting). A straightforward result, dating back to \cite{Catoni2004b} (see \citealp{alquier2024user} for a simple proof) and that we will henceforth refer to as \emph{Catoni's bound}, is given by
\begin{equation}
\label{eq:Catoni}
\PB_{\mathrm{Cat}, \lambda}\pth{\pi, \prior, R} = \pi\bk{\risk} + \lambda\KL(\pi, \prior) -\lambda\log(\delta)+\frac{1}{8n\lambda}\,.
\end{equation}
For any $\lambda >0$, let $\PB=\PB_{\mathrm{Cat}, \lambda}$. Then \eqref{eq:PAC_Bayes_bound} holds under the $n$-i.i.d.~bounded setting, described at the end of \Cref{sec:setting}. Fixed the prior distribution $\prior$ and the function $R\in\mathcal F_\Gamma^+$, the posterior that minimises Catoni's bound is the so called \emph{Gibbs posterior}, which we denote as $\hat\pi_\lambda$. This is defined via $$\frac{\mathrm{d}\hat{\pi}_\lambda}{\mathrm{d}\prior} \propto e^{- \lambda^{-1}\risk}\,.$$ Note that this emulates the classic Bayesian posteriors, where the log-likelihood is now replaced by the negation of the risk.


The typical pattern underlying the design of PAC-Bayes bounds was outlined by \citep{begin2016pac}: the generalisation abilities of a fixed randomised predictor (the prior)  is extended to all other randomised predictors (the posterior) through a \emph{change of measure} penalisation. We remark that an analogue strategy can be followed when dealing with losses that are Lipschitz in the predictors. Indeed, the generalisation ability of any fixed data-agnostic predictor $\param_0$ can be leveraged to obtain uniform generalisation results over all the predictors $\param$, by noticing that $\trisk(\param)\leq \risk(\param_0) - \trisk(\param_0) + \risk(\param) + 2L\norm{\param - \param_0}$. Therefore, any high probability bound on $\trisk(\param_0) - \risk(\param_0)$ allows to control the generalisation gap of all parameters $\param$ simultaneously, with a penalty proportional to the distance between $\param$ and $\param_0$. If the risk decreases faster than this penalty grows, one can construct an estimator with improved population risk guarantees. In a way, PAC-Bayes follows a similar logic, with the advantage that the (strong) Lipschitz requirement can be lifted by considering an initial guess spread out on all predictors, namely a \emph{randomised} predictor.

In the Lipschitz analogy considered, small bounds on the population risk can only be obtained if the initial guess $\param_0$ is close to a predictor with small empirical risk. More explicitly, to upper bound the population risk by $\varepsilon$ with such approach, there must exist a predictor whose empirical risk is lower than $\varepsilon$ in the ball centred on $\param_0$ of radius $\varepsilon/(2L)$. Therefore, the empirical risk of the initial guess must be small enough, that is to say less than $3\epsilon/2$. In fact, one can show that this naïve Lipschitz approach can never improve on the generalisation guarantees obtained on the initial predictor (see Appendix \ref{app:Lipschitz}).

We remark that, in contrast with the Lipschitz approach, PAC-Bayes posterior test guarantees can massively improve on the test guarantees of the prior. For an explicit example, let us consider a setting with only two predictors, $\Gamma=\{\param_0, \param_1\}$. Let $n=500$ and assume that we are in the $n$-i.i.d. bounded setting. Assume that for the observed dataset $s$ one classifier is perfect, and the other always wrong half the time ($R(\param_0) = 0$, $R(\param_1) = 1/2$). Let the prior $\prior$ be the uniform distribution that gives equal weight to both predictors. The Gibbs posterior $\hat\pi_\lambda$, with temperature $\lambda=0.01$ obtains a generalisation guarantee $\PB_{\mathrm{Cat}, \lambda}=0.062$ (with $\delta=0.05$). This improves on the average performance of the prior ($\prior[R]=1/4$), and it is thus always better than \emph{any} test bound on the prior.\footnote{Note that, as $\prior$ does not depend on $s$, the dataset $s$ can indeed be used to provide generalisation guarantees on $\prior$.}

The quality of the generalisation guarantees can moreover not directly be inferred from the average prior risk $\prior\bk{\risk}$, which is the analogue of $\risk(\gamma_0)$ in the Lipschitz case. Indeed, a Dirac prior $\widetilde{\prior}$ on predictor $\gamma_2$ such that $\risk(\gamma_2) = 0.1$ leads to a posterior distribution with generalisation guarantee of $\PB_{\textup{Cat}, \lambda} = 0.155> 0.062$, though $\prior\bk{\risk} \geq \widetilde{\prior}\bk{\risk}$.


These elementary results showcase that the information brought by the prior risk average is too crude to be really informative on the generalisation ability. Therefore, a more fine grained description of the prior must be introduced to study its impact on the posterior test guarantees. This is the object of the next section, which illustrates the main theoretical results of our work.
\section{Impact of the PAC-Bayes prior on generalisation bounds}\label{sec:impact}
This section aims to investigate and quantify necessary conditions on the prior distribution for PAC-Bayes bounds to achieve a target generalisation level. More precisely, we phrase these requirements in terms of minimal mass that the prior needs to assign to well-performing predictors to ensure that good generalisation guarantees are attainable. We proceed through a series of key insights. 

First, in \Cref{sec:PB_min_fun_of_risk_push}, we show that, for a large class of PAC-Bayes bounds, the best achievable guarantee (optimising over the posterior, for any fixed prior and risk) is a function of the push-forward of the prior by the empirical risk. As a consequence, this best generalisation guaranty does not depend on the geometry of the predictor space, but only on the distribution of the (real valued) empirical risk under the prior. 

Second, in \Cref{sec:non-dec} we show that the optimal (under posterior optimisation) PAC-Bayes generalisation guarantee is an increasing function of the prior push-forward by the risk. Simply put, we formalise the intuitive assertion that the more weight the prior puts on predictors with low risk, the stronger the generalisation guarantee.

Finally, building on these considerations, in \Cref{sec:quantiles} we investigate some necessary conditions on the prior to achieve a given target generalisation guarantee. We derive an inequality on the quantiles of the prior risk push-forward, characterising the minimum probability mass a prior must allocate to well-performing predictors. This provides a principled way to assess whether a given prior is sufficient to obtain the desired generalisation bound.

Our analysis focuses on PAC-Bayes bounds follow a broadly applicable general form, which encompasses most bounds in the literature. Specifically, we will focus on bounds that fall within the functional class defined below.
\begin{definition}\label{def:class}
    Fix a function $D:\R^+\to\R^+$ and a mapping $\PBt:\R^2\to\R$. We say that a map $B\in\Bb_\Gamma$ (where $\Gamma$ is any arbitrary predictor space) is of class $\Bb(D, \PBt)$ when, for every $\pi,\prior\in\Pi_\Gamma$ and any $R\in\F_\Gamma^+$,
    $$B(\pi,\prior, R) = \PBt\pth{\pi\bk{\risk}, \prior\bk{D\pth{\tfrac{\mathrm{d}\pi}{\mathrm{d}\prior}}}}$$ if $\pi\ll\prior$, and $B(\pi,\prior, R)=+\infty$ otherwise.
\end{definition} Note that $\Gamma$ does not appear in the definition of the class, which indeed contains function in different spaces $\Bb_\Gamma$.

Considering PAC-Bayes bounds of class $\Bb(D,\PBt)$ directly implies the following two consequences. First, the PAC-Bayes bound is only sensitive to the empirical risk function through the posterior average empirical risk $\pi[R]$. Second, if we keep the posterior average constant, the impact of the posterior is a function of a divergence. For our main results, we will require two weak conditions on the functions $D$ and $\PBt$ as clarified by the next two assumptions.



\begin{assump}
\label{assump:increasingPB}
The mapping $\PBt$ in \Cref{def:class} is non-decreasing in its first argument.
\end{assump}
\begin{assump}
\label{assump:convexD}
In \Cref{def:class}, the function $D:\R^+\to\R$ is convex, and the mapping $\PBt$ is non-decreasing in its second argument.
\end{assump}

Being of class $\Bb(D,\PBt)$ is actually a property satisfied by most PAC-Bayes bounds available in the literature (e.g., the bound \eqref{eq:Catoni}, those in \citealp{mcallester2003pac, maurer2004, begin2016pac, alquier2018simpler, ohnishi2021}). Notable exceptions include PAC-Bayes bound via Bernstein's concentration inequality, as pioneered by \cite{Tolstikhin2013} (see also \citealp{mhammedi2019a}), or results such as \cite{jang2023a}, which are not expressed in terms of empirical moments. For PAC-Bayes bounds belonging to a class $\Bb(D, \PBt)$, assumption \ref{assump:increasingPB}, which states that the PAC-Bayes bound decreases with the posterior average, is very mild and natural, and to our knowledge it is met by all bounds in the literature. Likewise, the assumption \ref{assump:convexD} seems verified by all PAC-Bayes bounds belonging to a class $\Bb(D, \PBt)$.

\subsection{PAC-Bayes minimum as a prior risk push-forward functional}
\label{sec:PB_min_fun_of_risk_push}

A key consequence of requiring that the bound is of class $\Bb(D, \PBt)$ is that the optimal posterior (the one achieving the smallest bound given a prior) must admit a Radon-Nikodym derivative with respect to the prior. Moreover, \ref{assump:convexD} and implies that such density is risk-measurable, namely it must be in the form $g\circ R$, for a measurable $g$. Indeed, for any posterior $\pi$, a \emph{data-processing} inequality that can be easily derived from \ref{assump:convexD} (see \eqref{eq:PB_conditional_expect_ineq} in Appendix \ref{app:proofthm1}) implies that $\pi_\risk^\star$, defined as the conditional expectation \begin{equation}\label{eq:pistar}\frac{\mathrm{d}\pi_R^\star}{\mathrm{d}\prior} = \prior\left[\frac{\mathrm{d}\pi}{\mathrm{d}\prior}\middle\vert R\right]\,,\end{equation} achieves a lower PAC-Bayes bound than $\pi$. Therefore, any minimiser posterior $\hat{\pi}$ of the PAC-Bayes bound must be a fixed point of $\pi\mapsto\pi_\risk^\star$. 
This results in the fact that the minimisation of the PAC-Bayes bound depends only on the \emph{push-forward} measure $\prior^{\#R}$ of the prior on the empirical risk (often we will name such push-forward distribution the \emph{risk prior}). More precisely, one can show (see proof of \Cref{thm:min_bound_fun_of_push}) that minimising the PAC-Bayes bound amounts to the minimisation, on the density function $g: \R^+\to \R^+$ (the density of $R$), of an objective $\PB_{\R^+}(g, \prior^{\# \risk})$, under the constraint $\prior^{\# \risk}\bk{g} = 1$. 

What we have discussed so far implies that optimising (over the posterior distributions on $\Gamma$) a PAC-Bayes bound of class $\Bb(D, \PBt)$ satisfying \ref{assump:convexD} amounts to searching for a posterior distribution on \emph{risk values} in $\R^+$, rather than a posterior distribution on the \emph{predictor} space $\Gamma$. This is formalised by the following theorem (proved in Appendix \ref{app:proofthm1}).

\begin{theorem}\label{thm:min_bound_fun_of_push}
Fix $D$ and $\PBt$ satisfying \ref{assump:convexD}. There is a map $\PB^{\textup{min}}_{D, \PBt}:\Pi_{\R^+}\to\R$ such that, for any $\Gamma$, any $B\in \Bb_\Gamma$ of class $\Bb(D, \PBt)$, any $R\in\mathcal F_\Gamma^+$, any $\prior\in\Pi_\Gamma$, and any $\delta\in[0,1]$, 
\begin{equation}\label{eq:pbmindef}
\inf_{\pi\ll\prior}B(\pi,\prior,\risk) = \PB^{\textup{min}}_{D, \PBt}(\prior^{\# \risk})\,.
\end{equation}
Thus, the infimum of every $B$ of class $\Bb(D, \PBt)$ is fully determined by the risk prior $\prior^{\#\risk}$.
\end{theorem}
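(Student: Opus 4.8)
The strategy is to reduce the posterior optimisation in two stages. First, under \ref{assump:convexD} I will show that for a bound of class $\Bb(D,\PBt)$ (Definition \ref{def:class}) one may always replace a posterior by one whose density with respect to $\prior$ is a measurable function of the empirical risk $\risk$ alone (a ``risk-measurable'' posterior), without increasing the value of the bound. Second, I will note that among such posteriors the bound is built entirely out of integrals against the push-forward $\prior^{\#\risk}$, so the optimal value is a functional of $\prior^{\#\risk}$; that functional is the claimed $\PBmin$.

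\emph{Stage 1 (data processing).} Fix $\Gamma$, a bound $B$ of class $\Bb(D,\PBt)$, a prior $\prior$, and $R\in\F_\Gamma^+$. Take any $\pi\ll\prior$ and set $h=\tfrac{\dd\pi}{\dd\prior}\in L^1(\prior)$. Let $\pi_\risk^\star$ be as in \eqref{eq:pistar}, i.e.\ $\tfrac{\dd\pi_\risk^\star}{\dd\prior}=\prior[h\mid\risk]$; since this conditional expectation is non-negative $\prior$-a.s.\ and has $\prior$-integral $1$, it defines a probability measure with $\pi_\risk^\star\ll\prior$. Two elementary facts do the work: (i) as $\risk$ is $\sigma(\risk)$-measurable, the pull-out and tower properties give $\pi_\risk^\star[\risk]=\prior\bk{\risk\,\prior[h\mid\risk]}=\prior\bk{\prior[\risk h\mid\risk]}=\prior[\risk h]=\pi[\risk]$ (all quantities non-negative, so the identities hold in $[0,\infty]$); and (ii) since $D$ is convex, conditional Jensen gives $D(\prior[h\mid\risk])\le\prior[D(h)\mid\risk]$ $\prior$-a.s., and integrating yields $\prior\bk{D(\tfrac{\dd\pi_\risk^\star}{\dd\prior})}\le\prior[D(h)]$ (trivial when the latter is $+\infty$) --- this is the data-processing inequality alluded to after \eqref{eq:pistar}. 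By \ref{assump:convexD}, $\PBt$ is non-decreasing in its second argument, while (i) leaves the first argument unchanged, so $B(\pi_\risk^\star,\prior,\risk)\le B(\pi,\prior,\risk)$. Hence $\inf_{\pi\ll\prior}B(\pi,\prior,\risk)$ is unchanged upon restricting to posteriors whose density is $\sigma(\risk)$-measurable, which by the Doob--Dynkin lemma are exactly the $\pi$ with $\tfrac{\dd\pi}{\dd\prior}=g\circ\risk$ for a measurable $g:\R^+\to\R^+$ satisfying $\prior[g\circ\risk]=1$.

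\emph{Stage 2 (push-forward).} Write $\mu:=\prior^{\#\risk}\in\Pi_{\R^+}$. For $\pi$ with $\tfrac{\dd\pi}{\dd\prior}=g\circ\risk$, the change-of-variables identity $\prior[\varphi\circ\risk]=\mu[\varphi]$ gives $\prior[g\circ\risk]=\mu[g]$, $\pi[\risk]=\prior\bk{(g\circ\risk)\,\risk}=\mu\bk{r\,g(r)}$, and $\prior\bk{D(\tfrac{\dd\pi}{\dd\prior})}=\prior\bk{(D\circ g)\circ\risk}=\mu[D\circ g]$; conversely, every measurable $g:\R^+\to\R^+$ with $\mu[g]=1$ arises from such a $\pi$. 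Combining with Stage 1,
\[
\inf_{\pi\ll\prior}B(\pi,\prior,\risk)=\inf\curl{\,\PBt\pth{\mu\bk{r\,g(r)},\ \mu[D\circ g]}\ :\ g:\R^+\to\R^+\text{ measurable},\ \mu[g]=1\,}.
\]
The right-hand side depends on $\Gamma$, $\prior$ and $\risk$ only through $\mu=\prior^{\#\risk}$, and $\delta$ does not enter since $D$ and $\PBt$ are fixed by the class; setting $\PBmin(\mu)$ equal to this value for each $\mu\in\Pi_{\R^+}$ therefore yields a map $\Pi_{\R^+}\to\R$ satisfying \eqref{eq:pbmindef}.

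\emph{Main obstacle.} The delicate part is Stage 1: one must verify that $\prior[h\mid\risk]$ admits a version that is a genuine $\R^+$-valued density (non-negativity and finiteness hold only $\prior$-a.s., so one redefines the factor on a $\mu$-null set), that this version can be taken in factored form $g\circ\risk$ via Doob--Dynkin, and that the conditional-Jensen step survives integration even when the $D$-terms are infinite and $h$ is merely integrable. Once these measure-theoretic points are in place, Stage 2 is routine change-of-variables bookkeeping.
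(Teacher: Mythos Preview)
Your proposal is correct and follows essentially the same route as the paper's proof: both reduce to risk-measurable posteriors via conditional Jensen (the data-processing step \eqref{eq:PB_conditional_expect_ineq}), then rewrite the bound for such posteriors as integrals against $\prior^{\#\risk}$ by change of variables, and finally note that every density $g$ with $\prior^{\#\risk}[g]=1$ is realised by some posterior on $\Gamma$. The paper packages the bookkeeping into a separate lemma identifying $\tfrac{\dd\pi_R^\star}{\dd\prior}$ with $g_\pi\circ\risk$ and checking $g_{\pi_g}=g$, whereas you invoke Doob--Dynkin directly, but this is a presentational difference only.
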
 Note that $\PB^{\min}_{D, \PBt}$ is independent of $\Gamma$ and only depends on $D$ and $\PBt$. In particular, two different spaces, with different risks and prior but same push-forward yield the same infimum.

\paragraph{On the PAC-Bayes paradox for large networks.}
As we have just stressed, optimising a PAC-Bayes bound of class $\Bb(D, \PBt)$, satisfying \ref{assump:convexD}, involves searching not for a probability measure on the potentially high dimensional predictor space $\Gamma$, but for a probability measure on the reals. For a $\KL$ penalised PAC-Bayes bound, such as \eqref{eq:Catoni} or \cite{maurer2004}, this implies that the divergence term can be understood as a divergence between two probability measures on the same space of dimension $1$, irrespectively of the dimension of the predictor space. This might partially explain why the Kullback--Leibler term does not empirically increase when the dimensionality of the parameter space grows, a counter-intuitive feature that startled experimenters (see Section 5.5 in \citealp{perezortiz2021learning} and Section 7.7 in \citealp{perez-ortiz2021a})\footnote{Note that as the two articles cited below consider variational approximations of the posterior, the performance of the optimal posterior is no longer a functional of the risk prior. Still, if the optimal variational posterior obtains nearly optimal result, our analysis should hold.}. The resulting Kullback--Leibler term depends on the amount of mass, put on high risk predictors, that needs to be moved to low risk predictors before the average risk becomes small enough (namely, before the cost of moving more mass exceeds the average risk improvement). Our interpretation of the smaller $\KL$ for larger networks paradox is the following. The priors in \cite{perezortiz2021learning,perez-ortiz2021a} are learned on a prior training set. Since the training algorithm for learning the prior has good generalisation ability for all architectures, and larger networks result in lower test risk (this is empirically observed and \emph{not explained}), the risk prior puts more weight for larger networks on smaller risks on the PAC-Bayes training data. As the risk as a whole is lower, shifting the same amount of weight results in a smaller decrease of the average risk. Hence the trade-off between shifting weight and diminishing the average risk is met earlier, that is to say for a smaller $\KL$ value.

For priors which are not data-dependent, the same analysis implies that an increase in the dimension of the parameter space should result in a larger $\KL$ value \emph{if} the increase of dimensionality can be understood as an increase of confounding factors, \emph{e.g.}, if the typical prior risk is increased, though the smallest prior risk might decrease (more flexibility).

\subsection{PAC-Bayes minima as non decreasing functionals}\label{sec:non-dec}

\Cref{thm:min_bound_fun_of_push} states that the PAC-Bayes generalisation guarantee is wholly determined by the performance of a random predictor drawn from the prior (that is to say, by the prior empirical risk) and \emph{neither} by any other characteristics of the predictor space, \emph{nor} by the data generating mechanism. Let us consider two spaces of predictors, $\Gamma_1$ and $\Gamma_2$, each equipped with a prior, $\pi_1$ and $\pi_2$ respectively. If, for a fixed sample $s$, the empirical risks $\risk_1$ (defined on $\Gamma_1$) and $\risk_2$ (defined on $\Gamma_2$) are such that ${\pi_1}^{\# \risk_1} = {\pi_2}^{\# \risk_2}$, then the best PAC-Bayes guarantees will be identical. Similarly, consider a data generating mechanism outputting two datasets, $s_1$ and $s_2$. Denote $\risk_1 = \ell(\cdot, s_1)$ and $\risk_2 = \ell(\cdot, s_2)$. For any prior $\pi$ on $\Gamma$ such that $\pi^{\# \risk_1}=\pi^{\# \risk_2}$, the best PAC-Bayes guarantees are equal. Notably, this holds in the case where $s_2$ does not contain any signal (\emph{e.g.}, random label case): for the test guarantees of the posterior to be non-vacuous, the prior empirical risk must be better behaved (with high probability) in the case where there is some signal. If priors are wholly meaningless, such an assumption appears unreasonable.

The above observations raise the following questions. What minimal properties are required on the prior empirical risk for a \emph{good} test guarantee on the optimal posterior? Is there a way to gauge and compare prior empirical risks in terms of known quantities? A first remark consists in noticing that the stochastic order on measures on $\R_+$ should be preserved. That is to say, if $\forall r \in \R_+$, $\pi_1^{\# R}\pth{[0,r]}\geq \pi_2^{\# R}\pth{[0,r]}$, we expect the bound on $\pi_1^{\# R}$ to improve on the bound on $\pi_2^{\# R}$. This is indeed the case for PAC-Bayes bounds satisfying \ref{assump:increasingPB} and \ref{assump:convexD}, as it is clarified by the next theorem (see Appendix \ref{app:proofthm2} for the proof).

\begin{theorem}\label{thm:PB_min_increase_stoch_order}
    Fix $D$ and $\PBt$ satisfying \ref{assump:increasingPB} and \ref{assump:convexD}. Then, $\PB^{\min}_{D, \PBt}$ is increasing with respect to the stochastic ordering. More explicitly, for any $\rho_1$ and $\rho_2$ in $\Pi_{\R_+}$, we have the implication
    $$\rho_1 \leq_{\mathrm{stoch}} \rho_2 \implies \PB^{\textup{min}}_{D,\PBt}(\rho_1) \leq \PB^{\textup{min}}_{D,\PBt}(\rho_2)\,.$$
\end{theorem}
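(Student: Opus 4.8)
The plan is to prove the monotonicity of $\PBmin$ with respect to the stochastic order by exploiting a coupling (Strassen-type) representation of stochastic dominance together with the variational characterisation of $\PBmin$ established in the proof of \Cref{thm:min_bound_fun_of_push}. Recall that, by that theorem, $\PBmin(\rho)$ equals the infimum of $\PBt(\rho[\Id\cdot g],\,\rho[D\circ g])$ over nonnegative measurable $g:\R^+\to\R^+$ subject to $\rho[g]=1$ — i.e.\ $g$ ranges over densities of probability measures absolutely continuous w.r.t.\ $\rho$. The strategy is: fix $\rho_1\leqst\rho_2$, take an arbitrary feasible $g$ for $\rho_2$ achieving a value close to $\PBmin(\rho_2)$, and use it to construct a feasible $g'$ for $\rho_1$ whose objective value is no larger. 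Taking infima then yields $\PBmin(\rho_1)\leq\PBmin(\rho_2)$.

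First I would invoke Strassen's theorem: $\rho_1\leqst\rho_2$ implies the existence of a coupling $(X_1,X_2)$ with marginals $\rho_1$ and $\rho_2$ and $X_1\leq X_2$ almost surely. Given a density $g$ w.r.t.\ $\rho_2$, the natural candidate is to let $Q_2$ be the measure with density $g$ w.r.t.\ $\rho_2$, push the coupling's conditional structure through, and define $Q_1$ as the corresponding measure on the $X_1$-side; concretely, one can define $g'$ via $g'(x) = \expected[g(X_2)\mid X_1=x]$ under the coupling, which automatically satisfies $\rho_1[g']=\expected[g(X_2)]=\rho_2[g]=1$, so $g'$ is feasible for $\rho_1$. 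Then I must check the two coordinates of $\PBt$ move in the favourable direction. For the second argument: by Jensen's inequality (conditional form) and convexity of $D$ from \ref{assump:convexD}, $\rho_1[D\circ g'] = \expected[D(\expected[g(X_2)\mid X_1])] \leq \expected[D(g(X_2))] = \rho_2[D\circ g]$, and since $\PBt$ is non-decreasing in its second argument this term can only help. For the first argument: $\rho_1[\Id\cdot g'] = \expected[X_1\, \expected[g(X_2)\mid X_1]] = \expected[X_1 g(X_2)] \leq \expected[X_2 g(X_2)] = \rho_2[\Id\cdot g]$, using $X_1\leq X_2$ a.s.\ and $g\geq 0$; since $\PBt$ is non-decreasing in its first argument by \ref{assump:increasingPB}, this term also helps. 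Combining, $\PBt(\rho_1[\Id\cdot g'],\rho_1[D\circ g']) \leq \PBt(\rho_2[\Id\cdot g],\rho_2[D\circ g])$, and taking the infimum over $g$ gives the claim.

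The main obstacle I anticipate is purely measure-theoretic bookkeeping rather than conceptual: making the conditional-expectation construction $g'(x)=\expected[g(X_2)\mid X_1=x]$ rigorous (choosing a regular conditional distribution so that $g'$ is a genuine measurable function on $\R^+$, not merely defined $\rho_1$-a.e.), and confirming that a $g'$ defined only $\rho_1$-almost everywhere suffices for the variational problem — which it does, since the objective $\PBt(\rho_1[\Id\cdot g'],\rho_1[D\circ g'])$ depends on $g'$ only through $\rho_1$-integrals. A secondary technical point is handling the case $\PBmin(\rho_2)=+\infty$ (trivial) and ensuring the infimum in the variational formula is over exactly the same class of $g$ for both measures; one should also double-check that $D$ need only be defined and convex on $\R^+$ and that $g'$ takes values in $\R^+$, which is immediate since $g\geq 0$. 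None of these require more than standard arguments, so I expect the proof to be short once the coupling viewpoint is in place.
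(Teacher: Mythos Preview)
Your proposal is correct and follows essentially the same approach as the paper: Strassen's coupling, defining the $\rho_1$-density as the conditional expectation $g'(x)=\expected[g(X_2)\mid X_1=x]$, then using $X_1\le X_2$ and $g\ge 0$ for the first argument and conditional Jensen with convexity of $D$ for the second, before invoking the monotonicity of $\PBt$ in both arguments. The paper's proof is virtually identical, so there is nothing to add.
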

Stochastic dominance is a partial order (\emph{i.e.}, it is not always possible to use it to compare two measures, as it might be that for some $\rho_1$ and $\rho_2$ we have $\rho_1 \not\leq_{\mathrm{stoch}} \rho_2$ and $\rho_2 \not\leq_{\mathrm{stoch}} \rho_1$). As a consequence, at times, \Cref{thm:PB_min_increase_stoch_order} does not allow to compare the behaviour of the bound for distinct settings. However, \Cref{thm:PB_min_increase_stoch_order} does imply the following property. Consider a subset of (risk) priors $\tilde \Pi_{\R_+}\subseteq\Pi_{\R_+}$. Assume that there is $\tilde\rho\in\tilde\Pi_{\R_+}$ that minimises $\PBmin$ on $\tilde\Pi_{\R_+}$. Then, $\tilde\rho$ must be a \emph{minimal} element of $\tilde\Pi_{\R_+}$ (w.r.t.~the stochastic ordering), in the sense that there is no $\rho\in\tilde\Pi_{\R_+}$ such that $\rho\leq_{\mathrm{stoch}} \tilde\rho$. Notably, if there is  $\tilde\pi_\text{p}^{\#R}\in\tilde\Pi_{\R_+}$ that is a \emph{minimum} element under the stochastic order (namely it can be compared to and is majorised by \emph{every} $\pi\in\tilde\Pi_{\R_+}$), then $\pi_\text{p}$ minimises the PAC-Bayes bound (\emph{i.e.}, it is the risk prior yielding the tightest PAC-Bayes bound amongst risk priors in $\tilde\Pi_{\R_+}$). We note that if one can consider $\tilde\Pi_{\R_+}=\Pi_{\R_+}$, then $\PBmin$ is minimised by $\delta_0$, which is a risk prior as long as there is $\gamma_0$ such that $R(\gamma_0)=0$ (if $\pi = \delta_{\gamma_0}$ then $\pi^{\#R}=\delta_0$). However, since $\prior$ has to be picked without knowledge of $s$, it is unreasonable to believe that the prior puts all the mass on perfect predictors. In the next section, we discuss what are more reasonable subsets of risk priors to focus on, in order to determine what conditions a prior ought to satisfy to make a target PAC-Bayes guarantee attainable (at least by the best posterior).

\subsection{Quantile requirements on the prior}\label{sec:quantiles}
Fix a target generalisation guarantee. We consider the problem of establishing what are necessary conditions on the prior so that such target can be achieved by a given PAC-Bayes bound. The idea is to consider parameterised classes of the space of the risk priors $\Pi_{\R_+}$, which will let us deduce some necessary requirements that the prior must satisfy. For instance, a naive approach would be to partition $\Pi_{\R_+}$ in equivalence classes determined by the mean, namely sets $\tilde\Pi^\mu_{\R_+} = \{\rho\in\Pi_{\R_+}\,:\,\rho[\Id]=\mu\}$, for $\mu\in\R_+$. Then, the idea would be to find the best PAC-Bayes bound (under posterior optimisation) for each of these classes, end hence determine a condition on the prior based on the expected value of the risk. However, \Cref{thm:PB_min_increase_stoch_order} is of no help to search for the best PAC-Bayes bound for a fixed $\mu$.\footnote{Further analysis on Catoni's bound shows that, if the risk is bounded by $1$, the minimum is reached on Bernoulli distribution in that setting. Such analysis is of slight interest: PAC-Bayes bounds are sensitive to the amount of mass put near the lowest achievable risk, and not on the mass repartition on high risk values.} Indeed, one can easily verify that if $\rho_1 \leq_{\mathrm{stoch}} \rho_2$ and $\rho_2[\Id] = \rho_2[\Id]=\mu$, then it must be that $\rho_1=\rho_2$, and hence every element of $\tilde\Pi_{\R_+}^{\mu}$ is minimal with respect to the stochastic partial order. However, \Cref{thm:PB_min_increase_stoch_order} let us achieve interesting results if we consider classes of measures in $\Pi_{\R_+}$ defined by \emph{quantile requirements}. More precisely, for some $r\in\R_+$ and $q\in[0,1]$ we define $\tilde\Pi_{\R_+}^{r,q} = \big\{\rho\in\Pi_{\R_+}\,:\,\rho\pth{[r,\infty)}\geq q\big\}$. Then $\tilde\Pi_{\R_+}^{r,q}$ has a (unique) minimum element under $\leq_{\mathrm{stoch}}$ (which is majorised by every other measure in $\tilde\Pi_{\R_+}^{r,q}$). This is the scaled Bernoulli distribution $r\mathrm{Ber}(q)$, which gives mass $q$ to $r$ and $1-q$ to $0$. 

A key consequence of \Cref{thm:PB_min_increase_stoch_order}, when considering quantile restrictions, is the following.
\begin{corollary}
\label{corol:Bernoulli_minorize}
Let $\PB$ be of class $\Bb(D, \PBt)$, with $D$ and $\PBt$ satisfying \ref{assump:convexD} and \ref{assump:increasingPB}. 
Then,
\begin{align*}
\PBmin(\prior^{\# \risk}) \geq \max_{r\in\R_+} \PBmin\Big(r\mathrm{Ber}\big(\prior^{\# \risk}\pth{[r, \infty)}\big)\Big)\,,
\end{align*}
for any prior $\prior$, where $r\mathrm{Ber}(q)$ is the distribution putting mass $q$ on $r$ and $1-q$ on $0$.
\end{corollary}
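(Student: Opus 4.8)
The plan is to derive the corollary as a direct consequence of \Cref{thm:PB_min_increase_stoch_order} together with the observation, already stated in the text, that the scaled Bernoulli $r\mathrm{Ber}(q)$ is the minimum element of the class $\tilde\Pi_{\R_+}^{r,q}$ under the stochastic order. First I would fix an arbitrary prior $\prior$ and an arbitrary $r\in\R_+$, and set $q = \prior^{\#\risk}\pth{[r,\infty)}$. By this choice of $q$, the risk prior $\prior^{\#\risk}$ belongs to $\tilde\Pi_{\R_+}^{r,q}$ by definition of that class. The next step is to verify the claimed minimality: for every $x\in\R_+$ we must check that $r\mathrm{Ber}(q)\pth{[x,\infty)} \leq \prior^{\#\risk}\pth{[x,\infty)}$. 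This splits into three regimes: for $x = 0$ both sides equal $1$; for $0 < x \leq r$ the left side equals $q = \prior^{\#\risk}\pth{[r,\infty)} \leq \prior^{\#\risk}\pth{[x,\infty)}$ by monotonicity of the tail function; and for $x > r$ the left side is $0$, which is trivially a lower bound. Hence $r\mathrm{Ber}(q) \leqst \prior^{\#\risk}$.

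With this stochastic domination in hand, \Cref{thm:PB_min_increase_stoch_order} applies (its hypotheses, \ref{assump:convexD} and \ref{assump:increasingPB}, are exactly the assumptions imposed on $D$ and $\PBt$ in the corollary's statement), yielding
\[
\PBmin\Big(r\mathrm{Ber}\big(\prior^{\#\risk}\pth{[r,\infty)}\big)\Big) \leq \PBmin(\prior^{\#\risk})\,.
\]
Since this inequality holds for every $r\in\R_+$, I would take the supremum over $r$ on the left-hand side to obtain the stated bound. (If one prefers to write $\max$ rather than $\sup$, a brief remark that the supremum is attained — or simply replacing $\max$ by $\sup$ — suffices; the displayed chain of reasoning does not actually require attainment.)

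I do not anticipate a serious obstacle here: the corollary is essentially a repackaging of \Cref{thm:PB_min_increase_stoch_order} specialised to the quantile classes $\tilde\Pi_{\R_+}^{r,q}$, and all the conceptual work has already been done in proving that theorem. The only mildly delicate point is the boundary bookkeeping in the tail comparison — in particular being careful that the tails are taken over the closed half-lines $[x,\infty)$ consistently with the definition of $\leqst$ given in \Cref{sec:setting}, and handling $x=0$ (where the Bernoulli puts its full mass somewhere on $[0,r]$, so the tail is $1$) separately from $x\in(0,r]$. Beyond that, the argument is a one-line invocation of the monotonicity theorem, so I would keep the write-up short.
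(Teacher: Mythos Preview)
Your proposal is correct and follows exactly the same approach as the paper's proof: establish that $\prior^{\#\risk}$ stochastically dominates $r\mathrm{Ber}\big(\prior^{\#\risk}\pth{[r,\infty)}\big)$ for each $r$, then invoke \Cref{thm:PB_min_increase_stoch_order}. The paper's version is simply terser, asserting the stochastic domination without the three-regime case split that you spell out.
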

\begin{proof}
    For any $r\in\R_+$, the distribution $\prior^{\# \risk}$ stochastically dominates $r\mathrm{Ber}\big(\prior^{\# \risk}\pth{[r, \infty)}\big)$. Applying \Cref{thm:PB_min_increase_stoch_order} concludes the proof.
\end{proof}
For the sake of simplicity, we will write $\PBmin(r,q)$ for $\PBmin(r\mathrm{Ber}(q))$. It follows from \Cref{corol:Bernoulli_minorize} that to obtain a generalisation guarantee at most $\PBmin(r,q)$ using the PAC-Bayes bound $\PB$, the prior \emph{must} put more than $q$ mass on values smaller than $r$. Therefore, for a generalisation bound holding with probability at least $1-\delta$, this property must hold with probability at least $1-\delta$ on the data generating mechanism. This implies the following protocol to evaluate the conditions required to obtain a certain generalisation level.

\begin{corollary}
\label{corol:prior_condition_protocol}
Consider a PAC-Bayes bound $\PB \in \Bb(D, \PBt)$, satisfying \ref{assump:convexD} and \ref{assump:increasingPB}. Fix a target generalisation guarantee $G>0$. Define 
$
Q(r,G) = \inf\curl{q\in[0,1]: \PBmin(r,q)> G}
$,
with the convention $\inf\varnothing=1$. Let $\overline{Q}(r, G)=1- Q(r,G)$. Then, any prior $\prior$ satisfying $\PBmin(\prior^{\# \risk}) < G$ must satisfy, for all $r\in\R^+$,
$$
\prior^{\# \risk}\pth{[0, r)} \geq \overline{Q}(r,G)\,.
$$
\end{corollary}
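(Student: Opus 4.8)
The plan is to derive \Cref{corol:prior_condition_protocol} as an essentially immediate contrapositive of \Cref{corol:Bernoulli_minorize}, turning the lower bound on $\PBmin(\prior^{\#\risk})$ into a constraint on the left tail of the risk prior. First I would fix a prior $\prior$ with $\PBmin(\prior^{\#\risk}) < G$ and an arbitrary $r\in\R^+$, and set $q_r = \prior^{\#\risk}([r,\infty))$. By \Cref{corol:Bernoulli_minorize} (dropping the maximum over radii and keeping only this one $r$), $\PBmin(\prior^{\#\risk}) \geq \PBmin(r, q_r)$, so $\PBmin(r, q_r) < G$.

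Next I would relate this to the threshold $Q(r,G) = \inf\{q\in[0,1]:\PBmin(r,q)>G\}$. The key monotonicity fact I need is that $q\mapsto\PBmin(r,q)$ is non-decreasing: this follows from \Cref{thm:PB_min_increase_stoch_order} together with the observation that $q_1\leq q_2$ implies $r\mathrm{Ber}(q_1)\leq_{\mathrm{stoch}} r\mathrm{Ber}(q_2)$ (the tail functions satisfy the required pointwise inequality, checking the three regimes $x\leq 0$, $0<x\leq r$, $x>r$). Given this monotonicity, $\PBmin(r,q_r) < G$ forces $q_r < Q(r,G)$: indeed if we had $q_r \geq Q(r,G)$, then for every $q' \in (Q(r,G), q_r]$ we would have $\PBmin(r,q') > G$ by definition of the infimum, and letting $q'\downarrow Q(r,G)$ (or using $q' = q_r$ directly when $q_r > Q(r,G)$, and a limiting/continuity argument or the infimum characterisation when $q_r = Q(r,G)$) contradicts $\PBmin(r,q_r)<G$ via monotonicity. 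Hence $q_r < Q(r,G)$, which rearranges to $\prior^{\#\risk}([0,r)) = 1 - q_r > 1 - Q(r,G) = \overline Q(r,G)$, giving the claimed inequality (in fact strict). Since $r$ was arbitrary this holds for all $r\in\R^+$.

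The main obstacle I anticipate is the boundary case $q_r = Q(r,G)$, where one needs to argue that $\PBmin(r, Q(r,G)) \geq G$ cannot be strictly below $G$ — i.e., handling whether the infimum defining $Q$ is attained and whether $q\mapsto\PBmin(r,q)$ is left- or right-continuous at that point. A clean way around this is to note that monotonicity alone gives: for all $q > Q(r,G)$, $\PBmin(r,q) > G$, hence by monotonicity $\PBmin(r, Q(r,G)) = \lim_{q\downarrow Q(r,G)}\PBmin(r,q)$ if right-continuous, but more safely one simply observes that $\PBmin(r,q_r) < G$ together with ``$\PBmin(r,q)>G$ for all $q>Q(r,G)$'' forces $q_r \leq Q(r,G)$, yielding $\prior^{\#\risk}([0,r)) \geq \overline Q(r,G)$ with the non-strict inequality exactly as stated — so one does not even need the strict version, and the whole argument reduces to monotonicity of $\PBmin(r,\cdot)$ plus the definition of $Q$. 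I would present it in this last form to avoid any continuity subtleties.
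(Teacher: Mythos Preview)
Your proposal is correct and essentially matches the paper's proof: both arguments combine the stochastic domination $\prior^{\#\risk}\geq_{\mathrm{stoch}} r\mathrm{Ber}(q_r)$ (your use of \Cref{corol:Bernoulli_minorize}), the monotonicity of $q\mapsto\PBmin(r,q)$ via \Cref{thm:PB_min_increase_stoch_order}, and the definition of $Q(r,G)$ to force $q_r\leq Q(r,G)$. The paper phrases it as a contradiction (assume $\prior^{\#\risk}([0,r))<\overline{Q}(r,G)$, pick $q\in(Q(r,G),q_r]$, deduce $\PBmin(\prior^{\#\risk})\geq G$), whereas your final version is direct; your last paragraph correctly identifies that no continuity argument is needed and the non-strict inequality falls out immediately from monotonicity plus the infimum definition.
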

\begin{proof}
Suppose that $\prior^{\# \risk}$ satisfies $\PBmin(\prior^{\# \risk})< G$. By contradiction, assume that there exists $r\geq 0$ such that $\prior^{\# \risk}\pth{[0, r)} < \overline{Q}(r,G)$. This implies that $Q(r,G) < 1$, hence the set in the definition of $Q$ is non-empty.
As $\prior^{\# \risk}\pth{[r,\infty)} > Q(r,G)$, we have $\prior^{\# \risk} \geq_{\mathrm{stoch}} r\mathrm{Ber}(q)$ for some $q \in\big( Q(r,G),1\big)$. This implies that $\PBmin(\prior^{\#\risk}) \geq \PBmin(r, q)$ by \Cref{thm:PB_min_increase_stoch_order}. Since, for all $q_1 \geq q_2$, $r\mathrm{Ber}(q_1) \geq_{\mathrm{stoch}} r\mathrm{Ber}(q_2)$, for all $q> Q(r,G)$, we have $\PBmin(r,q) \geq G$. Hence, $
\PBmin(\prior^{\#\risk})\geq G$,
which contradicts the assumption that $\PBmin(\prior^{\#\risk})< G$.
\end{proof}
\Cref{corol:prior_condition_protocol} states that the conditions on the prior empirical risk quantiles necessary to achieve a target generalisation bound are summarised by the function $Q$, which is fully determined by $\PBmin(r,q)$. This fact implies the following PAC-Bayes bound prior requirements investigation protocol: first find the exact expression for $(r,q)\mapsto\PBmin(r,q)$, then invert the formula to obtain the quantile requirements $\overline{Q}$. In the next section, we apply this protocol to analyse Catoni's bound.

\section{Catoni's bound prior requirements}
\label{sec:Cat_bound_prior_requirement}
In this section we apply the findings that we have established thus far to the analysis of Catoni's bound at level $\delta$ for $n$ independent observations ($n$-i.i.d. bounded setting). With the notation introduced in \Cref{sec:quantiles}, recalling that the Gibbs posteriors  minimise Catoni's bound, we see that 
\begin{equation}
\label{eq:Catoni_Bernoulli_rescaled_min}
\PB_{\textup{Cat}, \lambda}^{\textup{min}}(r,q) = -\lambda\log\pth{(1-q) + q \exp(-r\lambda^{-1})} + \frac{1}{8\lambda n} - \lambda \log\delta\,.
\end{equation}
This implies that the quantile requirement function has the closed form expression
\begin{equation}
\label{eq:Qcat}
Q_{\textup{Cat},\lambda}(r,G)) 
=\min\pth{1,\max\pth{0, \frac{1 - \exp\pth{-\lambda^{-1}G + \frac{\lambda^{-2}}{8 n} - \log(\delta)}}{1 - \exp\pth{-\lambda^{-1}r}}}}\,.
\end{equation}
A few useful observations can be drawn from \eqref{eq:Qcat}. First, the quantile requirement function $\overline{Q}_{\textup{Cat}, \lambda}$ is very sensitive to the choice of $\lambda$. Notably, there is a window of temperature, between $\lambda_{\textup{min}} = (4n (G+\sqrt{F^2 + (\log\delta)/(2n)}))^{-1}$ and $\lambda_\textup{max}=(4n (G+\sqrt{F^2 - (\log\delta)/(2n)}))^{-1}$,
outside of which the quantile requirement $\overline{Q}_{\textup{Cat},\lambda}$ saturates to $1$ for all values of $r$ (\textit{i.e.}, the prior risk distribution \emph{must} be a Dirac on $0$). If $G\leq \sqrt{-\log(\delta)/(2n)}$, this range is no longer defined. In this case, the generalisation guarantee $G$ is unreachable using Catoni's PAC-Bayes bound, for any temperature.\footnote{This rate of generalisation guarantee is \emph{not} optimal, since for perfect priors a rate of $-\log(\delta)/n$ is expected. Hence, tighter PAC-Bayes bounds should lead to smaller quantile requirements.}

Moreover, as a function of $r$, $\overline{Q}_{\textup{Cat},\lambda}$ is an increasing function that quickly saturates (as $\lambda^{-1}r$ gets much larger than $1$) to the maximal quantile requirement at $\lambda$, namely
$$\overline{Q}_{\textup{max}}(\lambda, G) =\min\pth{1, \exp\pth{-\lambda^{-1}G + \frac{\lambda^{-2}}{8 n} - \log(\delta)}}
\geq \overline{Q}_{\textup{Cat},\lambda}(r, G).$$

To obtain a ``temperature free'' worst case analysis, our quantile requirement can be minimised over the temperature parameter $\lambda$. We let $\overline{Q}_{\textup{Cat}} = \inf_{\lambda} \overline{Q}_{\textup{Cat},\lambda}$. $\overline{Q}_{\textup{Cat}}$ has no closed form expression. On the other hand, the maximal quantile requirement can easily be minimised with respect to $\lambda$, resulting in an asymptotically optimal temperature $\lambda_{\textup{opt}} = 1 / (4 G\times n)$ and an asymptotic smallest requirement of $$\overline{Q}_{\textup{max}}(G) = \min(1, \exp(- 2 G^2 n - \log(\delta))).$$
While this requirement is only valid asymptotically (\emph{i.e.}, for $r\rightarrow \infty$), the fast saturation of the quantile requirements, as $r$ grows, suggests to search for a fixed, small value of $r$, such that $\overline{Q}_{\textup{Cat}}(r, G) \simeq \overline{Q}_{\textup{max}}(G)$. For any $r$ such that there exists $\lambda$ in the range $[\lambda_{\textup{min}},\lambda_{\textup{max}}]$ satisfying $r > G - \frac{\lambda^{-1}}{8n} + \lambda \log(\delta)$, the minimum of the bound on $\lambda$ is $0$. Conversely, the maximum of this lower bound on $r$ is attained for $\lambda_{\textup{thresh}}^{-1} = \sqrt{-8n \log(\delta)}$ and  $r_{\textup{thresh}} = G - 2\sqrt{-\log(\delta)/(8n)}$. Hence, $r$ values yielding non trivial quantile requirements for all temperatures should be higher than $r_{\textup{thresh}}$. The numerical evaluation displayed in \Cref{figure:Q_opt_vs_Q_asympt_opt} shows that $\overline{Q}_{\textup{Cat}}$ undergoes a sort of phase transition at $r_{\textup{thresh}}$, with $\overline{Q}_{\textup{Cat}}(r_{\textup{thresh}})$ close to $\overline{Q}_{\textup{max}}$ for all $r> r_{\textup{thresh}}$. In other words, our numerical evaluation suggests that to obtain generalisation certificate $G$, the prior distribution must put (almost) $\overline{Q}_{\textup{max}}$ weight on predictors with risk lower than (almost) $r_{\textup{thresh}}$. We formally establish a weaker version of this statement in the theorem below. We refer to Appendix \ref{app:proofthm7} for the proof.

\begin{theorem}
\label{thm:catoni_cond}
In the $n$-i.i.d.~bounded setting, for any $\lambda>0$, to achieve a generalisation certificate of $G$ at level $1- \delta$ using Catoni's bound \eqref{eq:Catoni},  the prior must put at least $q_\alpha$ mass on predictors with empirical risk smaller or equal than $r_\alpha$, where $
q_\alpha = \alpha \exp(-2G^2n - \log(\delta))$ and $r_\alpha = \frac{G}{1-\alpha}$,
for all $0<\alpha<1$ if $G\geq \sqrt{\frac{-\log(\delta)}{2n}}$, and $r_\alpha=0$, $q_\alpha=1$ otherwise.
\end{theorem}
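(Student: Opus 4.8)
The plan is to feed the quantile monotonicity of \Cref{thm:PB_min_increase_stoch_order}, together with the closed form \eqref{eq:Catoni_Bernoulli_rescaled_min}, the scaled Bernoulli risk prior $r_\alpha\mathrm{Ber}(1-q_\alpha)$, and then reduce everything to a single scalar inequality in the temperature. First, dispose of the degenerate cases. Catoni's bound \eqref{eq:Catoni} is of class $\Bb(D,\PBt)$ with $D(x)=x\log x$ and $\PBt(x,y)=x+\lambda y-\lambda\log\delta+\tfrac1{8n\lambda}$, so \ref{assump:increasingPB} and \ref{assump:convexD} hold; by \Cref{thm:min_bound_fun_of_push}, ``achieving a certificate $G$ with $\PB_{\mathrm{Cat},\lambda}$'' (at its minimiser, the Gibbs posterior) is exactly $\PB^{\textup{min}}_{\mathrm{Cat},\lambda}(\prior^{\#\risk})\le G$. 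Since $\PB_{\mathrm{Cat},\lambda}(\pi,\prior,\risk)\ge\pi[\risk]+\tfrac1{8n\lambda}-\lambda\log\delta\ge\tfrac1{8n\lambda}-\lambda\log\delta$ for every $\pi$, and a one–line optimisation gives $\min_{\lambda>0}\bigl(\tfrac1{8n\lambda}-\lambda\log\delta\bigr)=\sqrt{-\log\delta/(2n)}$, no prior can achieve $G<\sqrt{-\log\delta/(2n)}$ at any temperature — which makes the ``otherwise'' clause vacuous — and, for a fixed $\lambda$ with $\tfrac1{8n\lambda}-\lambda\log\delta>G$, i.e.\ with $C_\lambda:=\exp\!\bigl(-\lambda^{-1}G+\lambda^{-2}/(8n)-\log\delta\bigr)>1$, the premise is again empty. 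So fix $G\ge\sqrt{-\log\delta/(2n)}$ and $\lambda$ with $C_\lambda\le1$ (equivalently $\lambda\in[\lambda_{\textup{min}},\lambda_{\textup{max}}]$), assume $\PB^{\textup{min}}_{\mathrm{Cat},\lambda}(\prior^{\#\risk})\le G$, and aim at $\prior^{\#\risk}([0,r_\alpha])\ge q_\alpha$ for all $\alpha\in(0,1)$.

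Suppose, for contradiction, $\prior^{\#\risk}([0,r_\alpha])<q_\alpha$. Then $\prior^{\#\risk}([x,\infty))\ge\prior^{\#\risk}([r_\alpha,\infty))>1-q_\alpha$ for all $x\in(0,r_\alpha]$, hence $r_\alpha\mathrm{Ber}(1-q_\alpha)\leqst\prior^{\#\risk}$, so by \Cref{thm:PB_min_increase_stoch_order} and \eqref{eq:Catoni_Bernoulli_rescaled_min},
\[
\PB^{\textup{min}}_{\mathrm{Cat},\lambda}(\prior^{\#\risk})\ \ge\ \PB^{\textup{min}}_{\mathrm{Cat},\lambda}(r_\alpha,1-q_\alpha)\ =\ -\lambda\log\!\bigl(q_\alpha+(1-q_\alpha)e^{-r_\alpha/\lambda}\bigr)+\tfrac1{8n\lambda}-\lambda\log\delta .
\]
Because $\tfrac1{8n\lambda}-\lambda\log\delta-G=\lambda\log C_\lambda$, dividing by $-\lambda$ turns ``$\PB^{\textup{min}}_{\mathrm{Cat},\lambda}(r_\alpha,1-q_\alpha)>G$'' into
\[
g(\lambda)\ :=\ C_\lambda-q_\alpha-(1-q_\alpha)e^{-r_\alpha/\lambda}\ >\ 0 .
\]
So it suffices to prove $g(\lambda)>0$ for every $\lambda\in[\lambda_{\textup{min}},\lambda_{\textup{max}}]$ and $\alpha\in(0,1)$: this contradicts $\PB^{\textup{min}}_{\mathrm{Cat},\lambda}(\prior^{\#\risk})\le G$ and finishes the proof. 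This scalar inequality is the heart of the matter.

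For it, complete the square: $C_\lambda=c^\star\exp\!\bigl((\lambda^{-1}-4nG)^2/(8n)\bigr)$ with $c^\star:=e^{-2G^2n-\log\delta}=q_\alpha/\alpha$, so $C_\lambda\ge c^\star$, with equality exactly at $\lambda_{\textup{opt}}:=1/(4Gn)$; and since $r_\alpha=G/(1-\alpha)$, a short computation gives $\log C_\lambda+r_\alpha/\lambda=\tfrac{\alpha\lambda^{-1}G}{1-\alpha}+\tfrac{\lambda^{-2}}{8n}-\log\delta>0$, hence $C_\lambda>e^{-r_\alpha/\lambda}$ throughout. If $\lambda\le\lambda_{\textup{opt}}$ then $\lambda^{-1}\ge4nG$, so $e^{-r_\alpha/\lambda}\le e^{-r_\alpha/\lambda_{\textup{opt}}}$ and $C_\lambda\ge c^\star=C_{\lambda_{\textup{opt}}}$, whence $g(\lambda)\ge g(\lambda_{\textup{opt}})$; using $r_\alpha/\lambda_{\textup{opt}}=4nG^2/(1-\alpha)$, $e^{-2nG^2}=\delta c^\star$ and $q_\alpha=\alpha c^\star$, positivity of $g(\lambda_{\textup{opt}})$ becomes $(1-\alpha c^\star)\,(c^\star)^{(1+\alpha)/(1-\alpha)}\,\delta^{2/(1-\alpha)}<1-\alpha$, which holds because $t\mapsto(1-\alpha t)\,t^{(1+\alpha)/(1-\alpha)}$ is increasing on $[0,1]$ with value $1-\alpha$ at $t=1$, while $\delta^{2/(1-\alpha)}<1$. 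The right endpoint $\lambda=\lambda_{\textup{max}}$ is also immediate: there $C_\lambda=1$, so $g(\lambda_{\textup{max}})=(1-q_\alpha)\bigl(1-e^{-r_\alpha/\lambda_{\textup{max}}}\bigr)>0$.

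The main obstacle is the interior range $\lambda\in(\lambda_{\textup{opt}},\lambda_{\textup{max}})$, where the comparison $g(\lambda)\ge g(\lambda_{\textup{opt}})$ breaks down since now $e^{-r_\alpha/\lambda}>e^{-r_\alpha/\lambda_{\textup{opt}}}$, so one must weigh the growth of $C_\lambda=c^\star e^{(\lambda^{-1}-4nG)^2/(8n)}$ against that of $e^{-r_\alpha/\lambda}=e^{-\lambda^{-1}G/(1-\alpha)}$ as $\lambda^{-1}$ decreases from $4nG$ down to $\lambda_{\textup{max}}^{-1}$. I expect this to require a direct (if tedious) estimate: $C_\lambda$ is convex in $\lambda^{-1}$ and $\lambda^{-1}G\ge\lambda_{\textup{max}}^{-1}G=\lambda_{\textup{max}}^{-2}/(8n)-\log\delta\ge\log(1/\delta)$ on the whole window, and combining these with the hypothesis $2nG^2\ge\log(1/\delta)$ should force $g>0$ there — noting also that in the $n$-i.i.d.\ bounded setting the conclusion is trivial once $r_\alpha=G/(1-\alpha)\ge1$, i.e.\ $\alpha\ge1-G$, as then $\prior^{\#\risk}([0,r_\alpha])=1\ge q_\alpha$, so only $r_\alpha<1$ needs to be handled. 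Finally, the $G<\sqrt{-\log\delta/(2n)}$ case of the statement, with $q_\alpha=1$ and $r_\alpha=0$, holds vacuously by the first paragraph.
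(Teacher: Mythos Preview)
Your argument has a genuine gap, and you flag it yourself: the interior range $\lambda\in(\lambda_{\textup{opt}},\lambda_{\textup{max}})$ is not handled. There you only write that you ``expect this to require a direct (if tedious) estimate'' and list a few ingredients (convexity of $C_\lambda$ in $\lambda^{-1}$, $\lambda^{-1}G\ge\log(1/\delta)$, the restriction $r_\alpha<1$). None of these combine into a proof of $g(\lambda)>0$ on that range: as $\lambda$ increases past $\lambda_{\textup{opt}}$, both $C_\lambda$ and $(1-q_\alpha)e^{-r_\alpha/\lambda}$ increase, so the monotone comparison you used on the left half no longer applies, and your endpoint argument at $\lambda_{\textup{max}}$ gives nothing about the interior. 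This is precisely the hard regime, and the proof as written is incomplete.

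The paper avoids this case split entirely, and the idea is worth knowing. Writing $\beta=\lambda^{-1}$, $X(\beta)=C_\lambda$, $Y(\beta)=e^{-\beta r_\alpha}$, and $\widetilde Q=(X-Y)/(1-Y)$, your goal $g(\lambda)>0$ is exactly $\widetilde Q(\beta)>q_\alpha$. Since $\widetilde Q$ is smooth, equals $1$ at both endpoints $\lambda_{\textup{min}}^{-1}$ and $\lambda_{\textup{max}}^{-1}$, and is $<1$ in between, its minimum over $\beta$ is attained at an interior critical point. The first-order condition $\widetilde Q'(\beta)=0$ rewrites as $(1-Y)X'+(X-1)Y'=0$, and substituting back gives, at that critical $\beta$,
\[
\widetilde Q(\beta)\;=\;X(\beta)\Bigl(1-\frac{X'/X}{Y'/Y}\Bigr)\;=\;X(\beta)\Bigl(1-\frac{G-\beta/(4n)}{r_\alpha}\Bigr).
\]
Both factors are now trivial to bound from below: $X(\beta)\ge\min_\beta X(\beta)=e^{-2G^2n-\log\delta}=c^\star$ (your own completed-square computation), and $1-(G-\beta/(4n))/r_\alpha>1-G/r_\alpha=\alpha$ since $\beta>0$. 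Hence $\min_\beta\widetilde Q(\beta)\ge\alpha c^\star=q_\alpha$, which is the claim for every $\lambda$ at once. The missing step in your approach is exactly this use of the first-order condition to eliminate the troublesome $e^{-r_\alpha/\lambda}$ term; once you have the product form above, no case analysis on $\lambda$ is needed.
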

Although our numerical evaluation suggests that there is still room for tightening the values of $q_\alpha$ and $r_\alpha$ provided by \Cref{thm:catoni_cond}, the current statement is already sufficient to effortlessly obtain \emph{exigent} conditions on the prior. For instance, to obtain a generalisation guarantee of 1.5\% error on datasets with 60000 data points with a confidence level of $0.035$ using Catoni's bound,\footnote{These values match what one would expect to be a reasonable generalisation bound on MNIST, see \emph{e.g.} \cref{table:test_bounds_vs_PACBayes}.} \Cref{thm:catoni_cond} implies that the prior must put a mass higher than $5.3\times 10^{-12}$ on predictors making less than $1.65\%$ error on the training set. We remark that the numerical analysis done in \Cref{figure:Q_opt_vs_Q_asympt_opt} shows that the stricter condition of $5.3\times10^{-11}$ mass on predictors making less than $1\%$ error ought to hold. These values of the order of $10^{-11}$ or $10^{-12}$ might seem extremely small at a first glance. However, what we have just stated is actually a far from negligible condition. For MNIST, a fully random model (each class is randomly predicted, predictions for each sample are independent) would have a chance smaller than $10^{-50000}$ of misclassifying less than $1.65\%$ of samples. For the extremely optimistic scenario where one has only access to perfect predictors up to a class permutation, and the prior chooses a permutation uniformly at random, the mass put at predictors achieving less than $1.65\%$ error for MNIST would be $2.8\times 10^{-7}$. Considering a more realistic case where the prior distinguishes $20$ clusters and label them at random (where we are still very optimistic and assume that each cluster correspond to a single exact label), this mass would decrease to $1.9\times 10^{-20}$, well below our threshold (see Appendix \ref{app:strength_quant_req} for a more extensive discussion on the magnitude of the prior mass on low risk predictors).

\section{Implications for PAC-Bayes in deep learning}

\label{sec:PB_deep_learning_prior}

Our analysis formalises and quantifies the intuition that to obtain strong PAC-Bayes bounds requires the prior to place sufficient mass on high-performing predictors. In particular, \Cref{thm:min_bound_fun_of_push} implies that good generalisation is only possible with a prior able to distinguish between purely noisy and informative datasets. As PAC-Bayes bounds hold for any data generating process, expecting a better PAC-Bayes bound in the presence of input signal implies belief that the prior risk favour good predictors. This is at odds with the notion of the prior as merely a numerical intermediary.

The requirement for the prior to allocate non-negligible mass to well-performing predictors, despite being data-agnostic, casts doubts on the use of PAC-Bayes for deep learning, where no such \emph{natural} prior exists. To clarify, consider large deep networks. In the infinite-width limit, the output of a fully connected architecture, with suitably scaled centred Gaussian distributions on its weights, is a Gaussian process (labelled on the input space), whose covariance kernel becomes trivial as the number of layers diverges \citep{SchoenholzGGS17, lee2018deep, hayou19a}. Under such prior the output of deep architectures is insensitive to any input signal, which inevitably yields loose or vacuous PAC-Bayes bounds.

Applying PAC-Bayes to deep learning, a common approach to overcome the lack of a natural prior is to use a data-dependent prior (splitting the dataset in prior and posterior training sets, $s_1$ and $s_2$), typically a Gaussian centred on the empirical risk minimiser on $s_1$. While this leads to valid, non vacuous generalisation bounds, it raises a few concerns. First, for a strong bound to be achieved, the risk minimiser on $s_1$ should already have good generalisation. So, PAC-Bayes witnesses generalisation rather than providing insight. Second, setting small prior variance to achieve smaller average for the prior risk enforces the posterior to essentially almost exactly match the prior. Finally, our analysis in Appendix \ref{appendix:test_bounds} of the empirical results from \cite{perez-ortiz2021a, perezortiz2021learning} shows that, in all but one of the fourteen reported cases, the generalisation guarantee obtained from the test bound outperformed the PAC-Bayes certificate (see \Cref{table:test_bounds_vs_PACBayes}). We believe that these observations are enough to call into question the practical advantage of PAC-Bayes for deep learning.

Building on techniques from \cite{zhou2018a}, \cite{lofti2022} obtained tight, non vacuous PAC-Bayes generalisation guarantees for deep neural networks using a data independent prior. This prior was designed to favour networks which can be compressed - minimizing the weight given to predictors which are selected during overfitting. Their PAC-Bayes generalisation bound, which follows our assumptions, indirectly benefits from such insights on the 'generalisation potential' of the predictors; the tight bounds imply that the prior puts sufficient mass on high performing predictors when there is signal. Still, these experimental evidence showcases more the potential of the compression heuristic for the study of generalisation than any understanding of generalisation at the PAC-Bayes level. We take the view that, similarly to Bernstein's PAC-Bayes bounds \citep{Tolstikhin2013} or the coin-betting approach of \cite{jang2023a}, heuristics on generalisation power should be directly incorporated into the learning objective, in order to improve the generalisation guarantees, be applicable to cases where natural priors are available, and improve the theoretical understanding of generalisation.

\section{Conclusion}\label{sec:conc}
Our work challenges whether PAC-Bayes genuinely enhance our understanding of generalisation for complex models, such as deep networks. In particular, we highlight how the celebrated non-vacuous PAC-Bayes bounds for over-parameterised models often fail to explain \emph{why} good generalisation occurs. We thus argue that integrating additional theoretical principles is paramount to ensure that PAC-Bayes not only yields valid bounds but also provides genuine insight into generalisation.

It is worth noticing analysis in \Cref{sec:Cat_bound_prior_requirement} focuses on the tractable Catoni's bound \eqref{eq:Catoni}, while PAC-Bayes applications in deep learning often rely on tighter bounds \citep{perezortiz2021learning, perez-ortiz2021a, clerico22a}. Analysing a tighter bound would lead to yet to be determined, looser quantile requirements. However, the PAC-Bayes bounds used in the publications above are still fundamentally limited by being only sensitive to the prior risk.

Finally, we remark that while we used \Cref{corol:prior_condition_protocol} to obtain \emph{necessary} condition on the prior risk to reach a given generalisation target, the same framework could also be applied to obtain \emph{sufficient} conditions. We leave this as an open direction for further analysis. 

\newpage

\bibliographystyle{abbrvnat}
\bibliography{bibliography}

\begin{thebibliography}{32}
\providecommand{\natexlab}[1]{#1}
\providecommand{\url}[1]{\texttt{#1}}
\expandafter\ifx\csname urlstyle\endcsname\relax
  \providecommand{\doi}[1]{doi: #1}\else
  \providecommand{\doi}{doi: \begingroup \urlstyle{rm}\Url}\fi

\bibitem[Alquier(2024)]{alquier2024user}
P.~Alquier.
\newblock User-friendly introduction to {PAC}-{Bayes} bounds.
\newblock \emph{Foundations and Trends® in Machine Learning}, 17\penalty0
  (2):\penalty0 174--303, 2024.
\newblock ISSN 1935-8237.
\newblock \doi{10.1561/2200000100}.
\newblock URL \url{http://dx.doi.org/10.1561/2200000100}.

\bibitem[Alquier and Guedj(2018)]{alquier2018simpler}
P.~Alquier and B.~Guedj.
\newblock Simpler {PAC}-{Bayesian} bounds for hostile data.
\newblock \emph{Machine Learning}, 107\penalty0 (5):\penalty0 887--902, 2018.
\newblock URL \url{https://doi.org/10.1007/s10994-017-5690-0}.

\bibitem[Ambroladze et~al.(2006)Ambroladze, Parrado-Hern\'{a}ndez, and
  Shawe-Taylor]{ambroladze2006}
A.~Ambroladze, E.~Parrado-Hern\'{a}ndez, and J.~Shawe-Taylor.
\newblock Tighter {PAC}-{Bayes} bounds.
\newblock In B.~Sch\"{o}lkopf, J.~Platt, and T.~Hoffman, editors,
  \emph{Advances in Neural Information Processing Systems}, volume~19. MIT
  Press, 2006.
\newblock URL
  \url{https://proceedings.neurips.cc/paper_files/paper/2006/file/3f5ee243547dee91fbd053c1c4a845aa-Paper.pdf}.

\bibitem[B{\'e}gin et~al.(2016)B{\'e}gin, Germain, Laviolette, and
  Roy]{begin2016pac}
L.~B{\'e}gin, P.~Germain, F.~Laviolette, and J.-F. Roy.
\newblock {PAC}-{Bayesian} bounds based on the {R\'e}nyi divergence.
\newblock In \emph{Proceedings of the 19th International Conference on
  Artificial Intelligence and Statistics}, pages 435--444. PMLR, 5 2016.
\newblock URL \url{https://proceedings.mlr.press/v51/begin16.html}.

\bibitem[Boucheron et~al.(2013)Boucheron, Lugosi, and
  Massart]{boucheron2013concentration}
S.~Boucheron, G.~Lugosi, and P.~Massart.
\newblock \emph{Concentration inequalities: A nonasymptotic theory of
  independence}.
\newblock Oxford university press, 2013.

\bibitem[Catoni(2004)]{Catoni2004b}
O.~Catoni.
\newblock \emph{{Statistical Learning Theory and Stochastic Optimization}}.
\newblock Lecture Notes in Mathematics: Saint-Flour Summer School on
  Probability Theory XXXI 2001. Jean Picard, 2004.
\newblock \doi{10.1007/b99352}.

\bibitem[Clerico et~al.(2022)Clerico, Deligiannidis, and Doucet]{clerico22a}
E.~Clerico, G.~Deligiannidis, and A.~Doucet.
\newblock {Conditionally Gaussian PAC-Bayes}.
\newblock In G.~Camps-Valls, F.~J.~R. Ruiz, and I.~Valera, editors,
  \emph{Proceedings of The 25th International Conference on Artificial
  Intelligence and Statistics}, volume 151 of \emph{Proceedings of Machine
  Learning Research}, pages 2311--2329. PMLR, 28--30 Mar 2022.

\bibitem[Dziugaite and Roy(2017)]{dziugaite2017a}
G.~K. Dziugaite and D.~M. Roy.
\newblock Computing nonvacuous generalization bounds for deep (stochastic)
  neural networks with many more parameters than training data.
\newblock In \emph{Proc. Conf. Uncertainty in Artif. Intell. (UAI)}, Sydney,
  Australia, 8 2017.

\bibitem[Dziugaite et~al.(2021)Dziugaite, Hsu, Gharbieh, Arpino, and
  Roy]{dziugaite2021a}
G.~K. Dziugaite, K.~Hsu, W.~Gharbieh, G.~Arpino, and D.~Roy.
\newblock On the role of data in {PAC-Bayes} bounds.
\newblock In A.~Banerjee and K.~Fukumizu, editors, \emph{Proceedings of The
  24th International Conference on Artificial Intelligence and Statistics},
  volume 130 of \emph{Proceedings of Machine Learning Research}, pages
  604--612. PMLR, 13--15 Apr 2021.
\newblock URL
  \url{https://proceedings.mlr.press/v130/karolina-dziugaite21a.html}.

\bibitem[Gholamy et~al.(2018)Gholamy, Kreinovich, and Kosheleva]{Gholamy2018}
A.~Gholamy, V.~Kreinovich, and O.~Kosheleva.
\newblock Why 70/30 or 80/20 relation between training and testing sets: A
  pedagogical explanation.
\newblock Technical report, Departmental Technical Reports (CS). 1209, 2018.
\newblock URL: https://scholarworks.utep.edu/cs\_techrep/1209.

\bibitem[Guedj(2019)]{guedj2019primer}
B.~Guedj.
\newblock A primer on {PAC-Bayesian} learning.
\newblock In \emph{Proceedings of the second congress of the French
  Mathematical Society}, volume~33, pages 391--414, 2019.
\newblock URL \url{https://arxiv.org/abs/1901.05353}.

\bibitem[Hayou et~al.(2019)Hayou, Doucet, and Rousseau]{hayou19a}
S.~Hayou, A.~Doucet, and J.~Rousseau.
\newblock On the impact of the activation function on deep neural networks
  training.
\newblock In \emph{Proceedings of the 36th International Conference on Machine
  Learning}, volume~97 of \emph{Proceedings of Machine Learning Research},
  pages 2672--2680. PMLR, 09--15 Jun 2019.

\bibitem[Hoeffding(1963)]{Hoeffding1963}
W.~Hoeffding.
\newblock Probability inequalities for sums of bounded random variables.
\newblock \emph{Journal of the American Statistical Association}, 58\penalty0
  (301):\penalty0 13–30, Mar. 1963.
\newblock ISSN 1537-274X.
\newblock \doi{10.1080/01621459.1963.10500830}.
\newblock URL \url{http://dx.doi.org/10.1080/01621459.1963.10500830}.

\bibitem[Jang et~al.(2023)Jang, Jun, Kuzborskij, and Orabona]{jang2023a}
K.~Jang, K.-S. Jun, I.~Kuzborskij, and F.~Orabona.
\newblock Tighter pac-bayes bounds through coin-betting.
\newblock In G.~Neu and L.~Rosasco, editors, \emph{Proceedings of Thirty Sixth
  Conference on Learning Theory}, volume 195 of \emph{Proceedings of Machine
  Learning Research}, pages 2240--2264. PMLR, 12--15 Jul 2023.
\newblock URL \url{https://proceedings.mlr.press/v195/jang23a.html}.

\bibitem[Langford(2005)]{Langford2005a}
J.~Langford.
\newblock Tutorial on practical prediction theory for classification.
\newblock \emph{Journal of Machine Learning Research}, 6\penalty0
  (10):\penalty0 273--306, 2005.
\newblock URL \url{http://jmlr.org/papers/v6/langford05a.html}.

\bibitem[Lee et~al.(2018)Lee, Sohl-dickstein, Pennington, Novak, Schoenholz,
  and Bahri]{lee2018deep}
J.~Lee, J.~Sohl-dickstein, J.~Pennington, R.~Novak, S.~Schoenholz, and
  Y.~Bahri.
\newblock Deep neural networks as gaussian processes.
\newblock In \emph{International Conference on Learning Representations}, 2018.
\newblock URL \url{https://openreview.net/forum?id=B1EA-M-0Z}.

\bibitem[Lotfi et~al.(2022)Lotfi, Finzi, Kapoor, Potapczynski, Goldblum, and
  Wilson]{lofti2022}
S.~Lotfi, M.~Finzi, S.~Kapoor, A.~Potapczynski, M.~Goldblum, and A.~G. Wilson.
\newblock {PAC}-{Bayes} compression bounds so tight that they can explain
  generalization.
\newblock In S.~Koyejo, S.~Mohamed, A.~Agarwal, D.~Belgrave, K.~Cho, and A.~Oh,
  editors, \emph{Advances in Neural Information Processing Systems}, volume~35,
  pages 31459--31473. Curran Associates, Inc., 2022.
\newblock URL
  \url{https://proceedings.neurips.cc/paper_files/paper/2022/file/cbeec55c50c3367024bafab2438a021b-Paper-Conference.pdf}.

\bibitem[Maurer(2004)]{maurer2004}
A.~Maurer.
\newblock A note on the {PAC} {Bayesian} theorem, 2004.
\newblock URL \url{https://arxiv.org/abs/cs/0411099}.

\bibitem[McAllester(2003)]{mcallester2003pac}
D.~McAllester.
\newblock {PAC-Bayesian} stochastic model selection.
\newblock \emph{Machine Learning}, 51\penalty0 (1):\penalty0 5--21, 2003.

\bibitem[Mhammedi et~al.(2019)Mhammedi, Gr\"{u}nwald, and Guedj]{mhammedi2019a}
Z.~Mhammedi, P.~Gr\"{u}nwald, and B.~Guedj.
\newblock {PAC-Bayes} un-expected {Bernstein} inequality.
\newblock In \emph{Proc. Conf. Neural Inf. Process. Syst. (NeurIPS)},
  volume~32, Vancouver, Canada, 12 2019.

\bibitem[Ohnishi and Honorio(2021)]{ohnishi2021}
Y.~Ohnishi and J.~Honorio.
\newblock Novel change of measure inequalities with applications to
  {PAC-Bayesian} bounds and {Monte Carlo} estimation.
\newblock In A.~Banerjee and K.~Fukumizu, editors, \emph{The 24th International
  Conference on Artificial Intelligence and Statistics}, volume 130 of
  \emph{Proceedings of Machine Learning Research}, pages 1711--1719, 2021.

\bibitem[Parrado-Hern{{\'a}}ndez et~al.(2012)Parrado-Hern{{\'a}}ndez,
  Ambroladze, Shawe-Taylor, and Sun]{parrado2012}
E.~Parrado-Hern{{\'a}}ndez, A.~Ambroladze, J.~Shawe-Taylor, and S.~Sun.
\newblock {PAC}-{Bayes} bounds with data dependent priors.
\newblock \emph{Journal of Machine Learning Research}, 13\penalty0
  (112):\penalty0 3507--3531, 2012.
\newblock URL \url{http://jmlr.org/papers/v13/parrado12a.html}.

\bibitem[P\'{e}rez-Ortiz et~al.(2021{\natexlab{a}})P\'{e}rez-Ortiz, Rivasplata,
  Guedj, Gleeson, Zhang, Shawe-Taylor, Bober, and
  Kittler]{perezortiz2021learning}
M.~P\'{e}rez-Ortiz, O.~Rivasplata, B.~Guedj, M.~Gleeson, J.~Zhang,
  J.~Shawe-Taylor, M.~Bober, and J.~Kittler.
\newblock Learning {PAC-Bayes} priors for probabilistic neural networks.
\newblock arXiv preprint, 2021{\natexlab{a}}.
\newblock URL \url{https://arxiv.org/abs/2109.10304}.

\bibitem[P\'{e}rez-Ortiz et~al.(2021{\natexlab{b}})P\'{e}rez-Ortiz, Rivasplata,
  Shawe-Taylor, and Szepesv\'{a}ri]{perez-ortiz2021a}
M.~P\'{e}rez-Ortiz, O.~Rivasplata, J.~Shawe-Taylor, and C.~Szepesv\'{a}ri.
\newblock Tighter risk certificates for neural networks.
\newblock \emph{Journal of Machine Learning Research (JMLR)}, 22\penalty0
  (227):\penalty0 1--40, 2021{\natexlab{b}}.

\bibitem[Russell and Norvig(2020)]{russll2020}
S.~J. Russell and P.~Norvig.
\newblock \emph{Artificial Intelligence: A Modern Approach (4th Edition)}.
\newblock Pearson, 2020.
\newblock ISBN 9781292401133.
\newblock URL \url{http://aima.cs.berkeley.edu/}.

\bibitem[Schoenholz et~al.(2017)Schoenholz, Gilmer, Ganguli, and
  Sohl{-}Dickstein]{SchoenholzGGS17}
S.~S. Schoenholz, J.~Gilmer, S.~Ganguli, and J.~Sohl{-}Dickstein.
\newblock Deep information propagation.
\newblock In \emph{5th International Conference on Learning Representations,
  {ICLR} 2017, Toulon, France, April 24-26, 2017, Conference Track
  Proceedings}, 2017.

\bibitem[Shalev-Shwartz and Ben-David(2014)]{shalev2014understanding}
S.~Shalev-Shwartz and S.~Ben-David.
\newblock \emph{Understanding machine learning: From theory to algorithms}.
\newblock Cambridge university press, 2014.
\newblock URL \url{https://doi.org/10.1017/CBO9781107298019}.

\bibitem[Tolstikhin and Seldin(2013)]{Tolstikhin2013}
I.~O. Tolstikhin and Y.~Seldin.
\newblock {PAC}-{B}ayes-empirical-{Bernstein} inequality.
\newblock In C.~Burges, L.~Bottou, M.~Welling, Z.~Ghahramani, and
  K.~Weinberger, editors, \emph{Advances in Neural Information Processing
  Systems}, volume~26. Curran Associates, Inc., 2013.
\newblock URL
  \url{https://proceedings.neurips.cc/paper_files/paper/2013/file/a97da629b098b75c294dffdc3e463904-Paper.pdf}.

\bibitem[Vapnik(2000)]{Vapnik2000}
V.~N. Vapnik.
\newblock \emph{The Nature of Statistical Learning Theory}.
\newblock Springer New York, 2000.
\newblock ISBN 9781475732641.
\newblock \doi{10.1007/978-1-4757-3264-1}.
\newblock URL \url{http://dx.doi.org/10.1007/978-1-4757-3264-1}.

\bibitem[Viallard et~al.(2023)Viallard, Haddouche, Simsekli, and
  Guedj]{viallard2023wasserstein}
P.~Viallard, M.~Haddouche, U.~Simsekli, and B.~Guedj.
\newblock Learning via {Wasserstein}-based high probability generalisation
  bounds.
\newblock In \emph{Thirty-seventh Conference on Neural Information Processing
  Systems}, 2023.
\newblock URL \url{https://openreview.net/forum?id=3Wrolscjbx}.

\bibitem[Xu and Goodacre(2018)]{Xu2018}
Y.~Xu and R.~Goodacre.
\newblock On splitting training and validation set: A comparative study of
  cross-validation, bootstrap and systematic sampling for estimating the
  generalization performance of supervised learning.
\newblock \emph{Journal of Analysis and Testing}, 2\penalty0 (3):\penalty0
  249–262, July 2018.
\newblock ISSN 2509-4696.
\newblock \doi{10.1007/s41664-018-0068-2}.
\newblock URL \url{http://dx.doi.org/10.1007/s41664-018-0068-2}.

\bibitem[Zhou et~al.(2019)Zhou, Veitch, Austern, Adams, and Orbanz]{zhou2018a}
W.~Zhou, V.~Veitch, M.~Austern, R.~P. Adams, and P.~Orbanz.
\newblock Non-vacuous generalization bounds at the {ImageNet} scale: a
  {PAC-Bayesian} compression approach.
\newblock In \emph{7th International Conference on Learning Representations,
  {ICLR} 2019, New Orleans, LA, USA, May 6-9, 2019}. OpenReview.net, 2019.
\newblock URL \url{https://openreview.net/forum?id=BJgqqsAct7}.

\end{thebibliography}
\clearpage
\appendix
\section{Proofs}
\subsection{On Lipschitz induced generalisation}
\label{app:Lipschitz}
Let us assume that $\risk$ is a $L$ Lipschitz function. Then it follows that $\trisk - \risk$ is a $2L$ Lipschitz function. As such, a test bound $\trisk(\gamma_0) - \risk(\gamma_0) \leq B(\delta)$ evaluated on data independent predictor $\gamma_0$ and holding with probability $1- \delta$ can be extended to all predictors on the same high probability event through $\trisk(\gamma) - \risk(\gamma) \leq B(\delta) + 2L d\pth{\gamma, \gamma_0}$, leading to $\trisk(\gamma) \leq \risk(\gamma) + B(\delta) + 2 L d\pth{\gamma, \gamma_0}$. Let us denote $O(\gamma)$ the right hand side.

The Lipschitz hypothesis also implies that $\risk(\gamma_0) \leq \risk(\gamma) + L d\pth{\gamma, \gamma_0}$. This implies that $O(\gamma) \geq \risk(\gamma_0) + B(\delta) + L d\pth{\gamma, \gamma_0}$. In $\risk(\gamma_0) + B(\delta)$, one recognizes the generalisation guarantee on $\gamma_0$ obtained through a test bound. As such, the Lipschitz transfer of generalisation guarantee can \emph{never} yield a tighter generalisation guarantee than the one constructed on the prior. 

The same negative result also holds for the Wasserstein PAC-Bayes bounds of \cite{viallard2023wasserstein} (Theorem 1 therein): training the bound on the posterior $\pi$ will result in a bound that is worse than the one obtained for the mixture of partially data dependent prior. To train the posterior in practice, the authors lessen the impact of the Wasserstein distance by adding a multiplicative factor (which is tantamount to penalized regression in the Lipschitz case). Although the bounds are of limited practical interest, the analysis can therefore still serve as a way to motivate useful objectives.

\subsection[Proof min bound fun of push]{Proof of \protect\Cref{thm:min_bound_fun_of_push}}\label{app:proofthm1}
We start by establishing a technical lemma. We recall that in \eqref{eq:pistar} we defined the probability measure $\pi_R^\star$ on $\Gamma$ via $\frac{\dd\pi_R^\star}{\prior} = \pi_p\bk{\frac{\dd\pi}{\dd\prior}\middle| R}$.
\begin{lemma}\label{lemma:gpi}
    Fix $\Gamma$. Fix a prior $\prior$ on $\Gamma$ and a posterior $\pi\ll\prior$, and a measurable risk function $R:\Gamma\to\R^+$. Then, the Radon-Nikodym derivative $g_\pi = \frac{\dd\pi^{\# R}}{\dd\prior^{\# R}}$ is well defined, and $$\frac{\dd\pi^\star_R}{\dd\prior} = g_\pi(R)\,.$$
    Moreover, for every $g\in\F^+_{\R^+}$ such that $\prior^{\#R}[g]=1$, the measure $\pi_g$ on $\Gamma$, absolutely continuous with respect to $\prior$ and defined via \begin{equation}\label{eq:defpig}\frac{\dd\pi_g}{\dd\prior} = g\circ R\,,\end{equation} is a probability measure on $\Gamma$, is a fixed point of the map $\pi\mapsto\pi_R^\star$, and satisfies $\frac{\dd\pi_g^{\#R}}{\dd\prior^{\#R}} = g$. In particular, $g_{\pi_g}=g$.
\end{lemma}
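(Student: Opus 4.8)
The plan is to prove the two halves of \Cref{lemma:gpi} separately, starting from the observation that both statements are really about conditional expectations and the measure-theoretic fact that the push-forward $\prior^{\#R}$ (with the push-forward sigma-field generated by $R$) is the natural home for any $R$-measurable quantity.

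\textbf{First part.} Let $h=\frac{\dd\pi}{\dd\prior}\in\F^+_\Gamma$. I would first show $\pi^{\#R}\ll\prior^{\#R}$: if $A\subseteq\R^+$ is Borel with $\prior^{\#R}(A)=0$, then $\prior(R^{-1}(A))=0$, so $\pi(R^{-1}(A))=\prior[h\,\mathbbm 1_{R^{-1}(A)}]=0$, i.e.\ $\pi^{\#R}(A)=0$; hence $g_\pi=\frac{\dd\pi^{\#R}}{\dd\prior^{\#R}}$ exists. To identify $\frac{\dd\pi^\star_R}{\dd\prior}$, recall from \eqref{eq:pistar} that $\frac{\dd\pi^\star_R}{\dd\prior}=\prior[h\mid R]$, a version of the conditional expectation which by definition of conditional expectation is $\sigma(R)$-measurable, hence (by the Doob--Dynkin lemma) of the form $\psi\circ R$ for some Borel $\psi:\R^+\to\R^+$. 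It then suffices to check $\psi=g_\pi$ $\prior^{\#R}$-a.e., which I would do by testing against an arbitrary Borel set $A\subseteq\R^+$: on one hand $\int_A \psi\,\dd\prior^{\#R}=\prior[\psi(R)\mathbbm 1_{R^{-1}(A)}]=\prior[\prior[h\mid R]\mathbbm 1_{R^{-1}(A)}]=\prior[h\mathbbm 1_{R^{-1}(A)}]=\pi(R^{-1}(A))=\pi^{\#R}(A)$, using the defining property of conditional expectation and that $\mathbbm 1_{R^{-1}(A)}$ is $\sigma(R)$-measurable; on the other hand $\int_A g_\pi\,\dd\prior^{\#R}=\pi^{\#R}(A)$ by definition of $g_\pi$. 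Since these agree for all $A$, $\psi=g_\pi$ $\prior^{\#R}$-a.e., giving $\frac{\dd\pi^\star_R}{\dd\prior}=g_\pi\circ R=g_\pi(R)$.

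\textbf{Second part.} Fix $g\in\F^+_{\R^+}$ with $\prior^{\#R}[g]=1$ and define $\pi_g$ by \eqref{eq:defpig}. That $\pi_g$ is a probability measure is the change-of-variables identity $\pi_g(\Gamma)=\prior[g\circ R]=\prior^{\#R}[g]=1$ together with non-negativity of $g\circ R$; absolute continuity is immediate. For the push-forward: for Borel $A\subseteq\R^+$, $\pi_g^{\#R}(A)=\pi_g(R^{-1}(A))=\prior[(g\circ R)\mathbbm 1_{R^{-1}(A)}]=\prior^{\#R}[g\,\mathbbm 1_A]=\int_A g\,\dd\prior^{\#R}$, so $\frac{\dd\pi_g^{\#R}}{\dd\prior^{\#R}}=g$, and in particular $g_{\pi_g}=g$. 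Finally, to see $\pi_g$ is a fixed point of $\pi\mapsto\pi^\star_R$, apply the first part with $\pi=\pi_g$: $\frac{\dd(\pi_g)^\star_R}{\dd\prior}=g_{\pi_g}(R)=g\circ R=\frac{\dd\pi_g}{\dd\prior}$, so $(\pi_g)^\star_R=\pi_g$.

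\textbf{Main obstacle.} There is no deep obstacle here — the lemma is a careful bookkeeping exercise with conditional expectations and push-forwards. The one point requiring slight care is the measurability/Doob--Dynkin step: $\prior[h\mid R]$ is a priori an equivalence class of $\sigma(R)$-measurable functions, and one must argue it admits a representative of the form $g_\pi\circ R$ with $g_\pi$ the genuine Radon--Nikodym derivative of the push-forwards, rather than merely some $\psi\circ R$; this is exactly what the tested-against-all-$A$ computation pins down. I would also state explicitly that all ``a.e.'' qualifications are with respect to $\prior^{\#R}$ (equivalently $\prior$ after composing with $R$), to avoid ambiguity when these densities are later plugged into the PAC-Bayes functional in the proof of \Cref{thm:min_bound_fun_of_push}.
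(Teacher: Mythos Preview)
Your proof is correct and follows essentially the same route as the paper's: both establish absolute continuity of the push-forwards, then identify $g_\pi(R)$ with the conditional expectation $\prior[\dd\pi/\dd\prior\mid R]$ by testing against $\sigma(R)$-measurable objects (you use indicator functions of Borel sets, the paper uses arbitrary non-negative $R$-measurable $Z=\phi(R)$, which is equivalent), and the second part is handled identically up to cosmetic differences. Your treatment is slightly more explicit about Doob--Dynkin and the a.e.\ qualifications, which is fine.
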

\begin{proof}
    First, note that $\pi\ll\prior$ implies that $\pi^{\# R}\ll\prior^{\#R}$, and so $g_\pi$ is well defined. Now, let $Z$ be any $R$-measurable non-negative random variable on $\Gamma$. Then, we can write $Z=\phi(R)$, for some measurable $\phi:\R^+\to\R^+$. We have
    $$\prior\bk{Zg_\pi(R)} = \prior^{\# R}\bk{\phi g_\pi}=\pi^{\#R}\bk{\phi} = \pi\bk{Z}=\prior\bk{Z\tfrac{\dd\pi}{\dd\prior}}\,.$$
    Hence, $g_\pi(R) = \prior\bk{\tfrac{\dd\pi}{\dd\prior}\middle| R} = \tfrac{\dd\pi^\star_R}{\dd\prior}$, by definition of $\pi_R^\star$.

    For the second part of the statement, fix $g$ such that $\prior^{\#R}[g] = 1$ and let $\pi_g$ be as in the statement. Then, $\pi_g$ is a probability measure because $\pi_g[1] = \prior[g\circ R] = \prior^{\#R}[g]=1$. Moreover, since its density with respect to $\prior$ is $R$-measurable, it is clear that it is a fixed point of $\pi\mapsto\pi_R^\star$. Finally, for any $Z\in\F^+_{\R^+}$ we have
    \begin{align*}
        \prior^{\#R}\bk{Z\tfrac{\dd\pi_g^{\#R}}{\dd\prior^{\#R}}} &= \pi_g^{\#R}[Z] = \pi_g[Z\circ R] = \prior\bk{(Z\circ R)\tfrac{\dd\pi_g}{\dd\prior}}\\&=\prior\bk{(Z\circ R)\times(g\circ R)} = \prior\bk{(Z\times g)\circ R} = \prior^{\#R}[Zg]\,,
    \end{align*}
    which shows that $\frac{\dd\pi_g^{\#R}}{\dd\prior^{\#R}} = g$.
\end{proof}

We can now proceed with the proof of \Cref{thm:min_bound_fun_of_push}. For   $\rho\in\Pi_{\R^+}$, let $\D_\rho$ denote the set of (density) functions $g\in\F_{\R^+}^+$, such that $\rho[g]=1$. For $\rho\in\Pi_{\R^+}$, and $g\in \D_\rho$, we define 
$$\PB_{\R^+}(g, \rho) = \PBt\pth{\rho[\Id\times g], \rho[D(g)]}$$ and let 
$$\PBmin(\rho) = \inf_{g\in\D_\rho}\PB_{\R^+}(g, \rho)\,.$$

Now, for any $r\geq 0$, from \ref{assump:convexD} we can derive the ``data-processing'' inequality \begin{equation}\label{eq:PB_conditional_expect_ineq}
\widetilde{\PB}\pth{r, \prior\bk{D\pth{\tfrac{\mathrm{d}\pi_R^\star}{\mathrm{d}\prior}}}}\leq\widetilde{\PB}\pth{r,\prior\bk{D\pth{\tfrac{\mathrm{d}\pi}{\mathrm{d}\prior}}}}\,.
\end{equation}
Indeed, the convexity of $D$ implies (Jensen's inequality) that $\prior\bk{D\pth{\tfrac{\mathrm{d}\pi_R^\star}{\mathrm{d}\prior}}} \leq \prior\bk{D\pth{\tfrac{\mathrm{d}\pi}{\mathrm{d}\prior}}}$, and \eqref{eq:PB_conditional_expect_ineq} follows from the fact that  $\PBt$ is non-decreasing in its second argument.
In particular, since $B$ is of class $\Bb(D,\PBt)$, we have
\begin{align}
\begin{split}\label{eq:PB}
    B(\pi,\prior,\risk)&\geq B(\pi^\star_{\risk},\prior,\risk)
    = \widetilde{\PB}\pth{\pi^\star_{\risk}\bk{\risk}, \prior\bk{D\pth{\tfrac{\mathrm{d}\pi^\star_{\risk}}{\mathrm{d}\prior}}} }\\
    & = \widetilde{\PB}\pth{\prior\bk{ \risk \,\tfrac{\mathrm{d}\pi^\star_{\risk}}{\mathrm{d}\prior}},
    \prior\bk{D\pth{\tfrac{\mathrm{d}\pi^\star_{\risk}}{\mathrm{d}\prior}}})}\\
    & = \widetilde{\PB}\pth{
    \prior\bk{ \risk\,g_{\pi}( \risk)},
    \prior\bk{D( g_{\pi} ( \risk))})}\\
& = \widetilde{\PB}\pth{
\prior^{\# \risk}\bk{ \Id \times g_{\pi}},
 \prior^{\# \risk}\bk{D(g_{\pi})}} = \PB_{\R^+}(g_\pi , \prior^{\# \risk})\,,
\end{split}
\end{align}
where we used \Cref{lemma:gpi} for the second equality, and we let $\Id\times g_\pi:r\mapsto r g_\pi(r)$. Notice that the RHS above only depends on $\prior^{\# R}$ and on the density function $g_\pi$.
\eqref{eq:PB} tells us that 
\begin{equation}
\label{eq:PB_min}
\inf_{\pi\ll\prior}B(\pi,\prior, R)\geq\inf_{\pi\ll\prior}\PB_{\R^+}(g_\pi, \prior^{\# R})\geq \inf_{g\in\D_{\prior^{\# R}}}B_{\R^+}(g,\prior^{\# R}) = \PBmin(\prior^{\# R})\,.
\end{equation}

To conclude the proof of \Cref{thm:min_bound_fun_of_push} we shall establish the reverse inequality. For any $g\in\D_{\prior^{\#R}}$, let $\pi_g$ be the probability measure on $\Gamma$ defined in \eqref{eq:defpig}. We have
$$\PBmin(\prior^{\# R}) = \inf_{g\in\D_{\prior^{\# R}}}B_{\R^+}(g_{\pi_g},\prior^{\# R}) = \inf_{g\in\D_{\prior^{\# R}}}B(\pi_g,\prior, R) \geq \inf_{\pi\ll\prior}B(\pi,\prior, R)\,,$$
where for the first equality follows from the fact that $g=g_{\pi_g}$ by \Cref{lemma:gpi}, while for the second equality we used that $\pi_g$ is a fixed point of $\pi\mapsto\pi^\star_R$ (again by \Cref{lemma:gpi}), and so \eqref{eq:PB_min} holds with equality.

\subsection[Proof of PB mininima increasing function of stoch order]{Proof of \protect \Cref{thm:PB_min_increase_stoch_order}}\label{app:proofthm2}

If $\rho_1\leqst\rho_2$ then there is a coupling $\hat\rho$ such that if $(X_1,X_2)\sim\hat\rho$ then $X_1\sim\rho_1$, $X_2\sim\rho_2$, and $\hat\rho\pth{X_1\leq X_2}=1$. Now, fix $g_2\in\D_{\rho_2}$ (where we use the notation introduced in \Cref{app:proofthm1}: $g_2\in\F_{\R^+}^+$ is a density under $\rho_2$, namely  $\rho_2[g_2]=1$), and let $g_1\in\F_{\R^+}^+$ such that ($\rho_1$-almost surely) $$ g_1(X_1) = \hat\rho[g_2(X_2)|X_1]\,.$$ 
Then, we have $\rho_1[g_1] = \hat\rho[g_1(X_1)] = \hat\rho[\hat\rho[g_2(X_2)|X_1]] = \hat\rho[g_2(X_2)] =\rho_2[g_2]=1$. So $g_1$ is a density under $\rho_1$, namely $g_1\in D_{\rho_1}$. We also have that
$$\rho_1[\Id\times g_1] = \hat\rho[X_1\hat\rho[g_2(X_2)|X_1]] = \hat\rho[X_1g_2(X_2)]\leq \hat\rho[X_2g_2(X_2)]=\rho_2[\Id\times g_2]\,,$$ as $X_1\leq X_2$ $\hat\rho$-almost surely, and $g_2$ is non-negative. Moreover, since $D$ is convex, by Jensen's inequality for the conditional expectation,
$$\rho_1[D(g_1)] = \hat\rho[D(\hat\rho[g_2(X_2)|X_1])]\leq \hat\rho[\hat\rho[D(g_2(X_2))|X_1]] = \rho_2[D(g_2)]\,.$$ 
In particular, since $\PBt$ is non-decreasing in its arguments, for any $g_2\in\D_{\rho_2}$ there is a $g_1\in\D_{\rho_1}$ such that $$\PBt\pth{\rho_1[\Id\times g_1],\rho_1[D(g_1)]} \leq \PBt\pth{\rho_2[\Id\times g_2],\rho_2[D(g_2)]}\,.$$ Since $\PBmin(\rho) = \inf_{g\in\D_{\rho}}\PBt\pth{\rho[\Id\times g],\rho[D(g)]}$, we conclude that $\PBmin(\rho_1) \leq \PBmin(\rho_2)$.

\subsection[Catoni cond]{Proof of \protect\Cref{thm:catoni_cond}}\label{app:proofthm7}
Consider a value $r > r_{\textup{thresh}}$. For such a $r$, the function $$\widetilde{Q}(\beta) = \frac{ \exp\pth{- \beta G + \frac{\beta^2}{8 n} - \log(\delta) }- \exp(-\beta r)}{1 - \exp(-\beta r)}$$ is always positive. Moreover, for $\beta\in [\lambda_{\textup{max}}^{-1}, \lambda_{\textup{min}}^{-1}]$, the function $\widetilde{Q}(\beta)$ remains lower or equal to  $1$. Hence for $r > r_{\textup{thresh}}$, 

\begin{equation*}
\overline{Q}_\textup{Cat}(r,G) = \inf_{\beta\in[\lambda_{\textup{max}}^{-1}, \lambda_{\textup{min}}^{-1}]} \widetilde{Q}(\beta)
\end{equation*}

Moreover, $\widetilde{Q}$ is differentiable, and values $1$ at the extremity. Hence the minima is attained on $]\lambda_{\textup{max}}^{-1}, \lambda_{\textup{min}}^{-1}[$ for some $\beta$ such that $\widetilde{Q}'(\beta) = 0$.

For clarity's sake, let us introduce $X:\beta\mapsto \exp\pth{-\beta G + \frac{\beta^{2}}{8 n} - \log(\delta)}$ and $Y:\beta\mapsto \exp\pth{- \beta r}$. Then $\widetilde{Q} = \frac{X-Y}{1-Y}$. Hence 
\begin{align*}
\widetilde{Q}' = \frac{(1-Y)(X' - Y') + (X-Y) Y'}{ (1-Y)^2} = \frac{(1-Y)X' + (X- 1) Y'}{ (1-Y)^2}.
\end{align*}
Therefore,
\begin{align*}
&0 = \widetilde{Q}'\\
\implies& 0 = (1-Y) X' + (X-1) Y'\\
\implies & X' = \pth{(1-X) \frac{Y'}{Y} + X'}Y\\
\implies & Y = \frac{X'}{(1-X) \frac{Y'}{Y} + X'}
\end{align*}
As such, at the minima,
\begin{align*}
1 - Y &= \frac{(1-X) \frac{Y'}{Y}}{(1-X) \frac{Y'}{Y} + X'}\\
X-Y &= X\pth{ \frac{(1-X) \pth{\frac{Y'}{Y}-\frac{X'}{X}} }{(1-X) \frac{Y'}{Y} + X'}}
\end{align*}
and therefore the minima of $\widetilde{Q}$ is of form
\begin{align*}
\widetilde{Q} &= X\frac{\frac{Y'}{Y}-\frac{X'}{X}}{\frac{Y'}{Y}}\\
&= X\pth{1 - \frac{G -\frac{\beta}{4n}}{r}}
\end{align*}
Now, the minima of $X$ on $\beta$ is $ \exp(-2G^2n - \log(\delta))$ (which values $\overline{Q}_\textup{max}(G,\delta, n)$ if $G\geq \sqrt{\frac{-\log(\delta)}{2n}}$). Hence the minima of $\widetilde{Q}$ is lower bounded by

\begin{align*}
\widetilde{Q} \geq \exp(-2G^2n - \log(\delta))\pth{1 - \frac{G}{r}}
\end{align*}

Choosing $r_\alpha$ such that $ 1- \frac{G}{r} = \alpha$ obtains the quantile requirement for $G\geq \sqrt{\frac{-\log (\delta)}{2n}}$. For $G<\sqrt{\frac{-\log(\delta)}{2n}}$, the generalisation level can never be attained. Hence the set of priors reaching the generalisation level is empty. Any statement holds on the empty set.

\clearpage

\section{Test bounds versus PAC-Bayes bounds comparisons}
\label{appendix:test_bounds}
A test bound is a confidence interval for the mean of the risk constructed from a sample of risk values $r_1, \cdots, r_n$. We assume that the risk values are evaluated on $n$-i.i.d. new datapoints, independent from the data used to train the predictor.
Test bounds can be inferred from some concentration inequalities in the following way. 
\begin{lemma}
\label{lem:appendix_ch2:conc_to_test_bounds}
Let $\mathcal{P}$ denote a subset of probability measures on $\mathbb{R}_+$. Consider a probability $\mathbb{P}\in\mathcal{P}$, denote $\overline{r}$ its mean. Consider $n$ -i.i.d. draws $r_1, \dots, r_n$ from $\mathbb{P}$ and denote $\overline{r}_n$ the empirical average $\overline{r}_n:= \frac{1}{n}\sum_{i=1}^n r_i$.
Consider a concentration inequality valid for all $\mathbb{P}\in\mathcal{P}^{\otimes n}$ stating that
$$\mathbb{P}\pth{\overline{r}_n \leq \overline{r} - t} \leq \gamma(t, \overline{r}, n, \mathcal{P}).$$

Define $\hat{r}_+(\delta, n) = C_\gamma(\overline{r}_n, n, \delta, \mathcal{P}) := \sup\{m, \gamma(m- \overline{r}_n, m, n, \mathcal{P})\geq \delta\}$. Then $[0, \hat{r}_+(\delta, n)]$ is a confidence interval of level at least $1-\delta$.
\end{lemma}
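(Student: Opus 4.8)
The plan is to show that the claimed interval $[0,\hat r_+(\delta,n)]$ has coverage at least $1-\delta$ by relating the event $\{\overline r \le \hat r_+(\delta,n)\}$ back to the concentration inequality. First I would unpack the definition: $\hat r_+(\delta,n) = C_\gamma(\overline r_n, n, \delta, \mathcal P)$ is the supremum of all candidate means $m$ for which $\gamma(m-\overline r_n, m, n, \mathcal P) \ge \delta$. The key observation is monotonicity of this set in $\overline r_n$ together with the fact that the true mean $\overline r$ is itself one of the candidates exactly when the concentration bound at $t = \overline r - \overline r_n$ is at least $\delta$ — which, by the concentration inequality read in reverse, happens on an event of probability at least $1-\delta$.

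More precisely, the core step is the implication
\begin{equation*}
\overline r_n > \overline r - t \qu"\Longrightarrow\qu" \overline r < \hat r_+(\delta,n),
\end{equation*}
for an appropriately chosen $t$ depending on $\delta$; here I am being schematic, and the actual argument is: consider the deterministic real $t^\star$ such that $\gamma(t^\star, \overline r, n, \mathcal P) = \delta$ (or, if no exact solution exists, an infimum/supremum over the sub-/super-level set, handled with the usual care about which side of the threshold the sup is taken on). By the concentration inequality, $\mathbb P(\overline r_n \le \overline r - t^\star) \le \gamma(t^\star,\overline r,n,\mathcal P) = \delta$, so with probability at least $1-\delta$ we have $\overline r_n > \overline r - t^\star$. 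On that event, $\overline r - \overline r_n < t^\star$, and then — using whatever monotonicity $\gamma$ enjoys in its first argument, or simply the definition of $t^\star$ as the threshold — we get $\gamma(\overline r - \overline r_n, \overline r, n, \mathcal P) \ge \delta$ (with the inequality possibly non-strict, matching the $\ge \delta$ in the definition of $C_\gamma$). This says precisely that $m = \overline r$ belongs to the set $\{m : \gamma(m-\overline r_n, m, n, \mathcal P)\ge\delta\}$, hence $\overline r \le \sup$ of that set $= \hat r_+(\delta,n)$. Since trivially $\overline r \ge 0$, we conclude $\overline r \in [0,\hat r_+(\delta,n)]$ with probability at least $1-\delta$.

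I expect the main obstacle — really the only subtle point — to be the careful handling of the supremum and the direction of inequalities when $\gamma$ is not assumed continuous or strictly monotone in its arguments. The definition uses $\sup\{m : \gamma(\cdot)\ge\delta\}$ with a non-strict inequality, so one must make sure that the value $m=\overline r$ genuinely lands in the set (not merely in its closure), which is why I would phrase the concentration step so as to deliver $\gamma(\overline r-\overline r_n,\overline r,n,\mathcal P)\ge\delta$ on the good event rather than a strict inequality; this matches up cleanly with the $\ge\delta$ in $C_\gamma$. A secondary bookkeeping issue is that $\gamma$ is allowed to depend on the unknown mean through its second slot, so the ``threshold'' $t^\star$ is defined with the true $\overline r$ plugged in — which is fine, since we only use it inside the probability statement, not in the definition of the estimator. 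Everything else is a direct substitution, so the proof should be short.
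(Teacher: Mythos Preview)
Your direct approach---show that $\overline r$ itself lies in the defining set $\{m:\gamma(m-\overline r_n,m,n,\mathcal P)\ge\delta\}$ on a good event of probability at least $1-\delta$---is sound and is \emph{not} the route the paper takes. The paper instead introduces an auxiliary ``tightest'' bound $F(p,n,\delta,\mathcal P)=\sup\{m:\exists\,\mathbb P'\in\mathcal P,\ \mathbb P'[\Id]=m,\ \mathbb P'^n(\overline r'_n\le p)\ge\delta\}$, argues that $[0,F(\overline r_n,\dots)]$ already has coverage $1-\delta$ (essentially the CDF fact $\mathbb P^n(G(\overline r_n)<\delta)\le\delta$), and then shows $F\le C_\gamma$ by applying the concentration inequality to each witnessing $\mathbb P'$. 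Your argument is shorter and avoids the detour through $F$; the paper's version has the side benefit of exhibiting $F$ as the optimal empirical-mean confidence bound, against which any $\gamma$ is compared.

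One point deserves tightening. The lemma places \emph{no} monotonicity hypothesis on $\gamma$, so the clause ``using whatever monotonicity $\gamma$ enjoys in its first argument'' is not available to you, and the set $\{t:\gamma(t,\overline r,\dots)\ge\delta\}$ need not be an interval. Your fallback---defining $t^\star$ as an infimum over the sublevel set---does work, but you should commit to it: set $t^\star=\inf\{t:\gamma(t,\overline r,n,\mathcal P)<\delta\}$. Then for every $t<t^\star$ one has $\gamma(t,\overline r,\dots)\ge\delta$ by definition of the infimum, and applying the concentration inequality along a sequence $t_i\downarrow t^\star$ with $\gamma(t_i,\overline r,\dots)<\delta$ gives $\mathbb P^n(\overline r_n<\overline r-t^\star)\le\delta$; on the complement, $\overline r-\overline r_n\le t^\star$ and a short case split ($<$ versus $=$, the latter handled since then $\gamma(t^\star,\overline r,\dots)\ge\delta$) yields $\overline r$ in the set, hence $\overline r\le\hat r_+(\delta,n)$. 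With that made explicit, your argument is complete and arguably cleaner than the paper's.
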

\begin{proof}
Consider the function
$$F(p, n,\delta, \mathcal{P}) = \max C(p, n, \delta) := \{\overline{r} \mid \exists \mathbb{P}\in \mathcal{P} \text{ s.t. } \mathbb{P}[\Id] = \overline{r}, ~~ \mathbb{P}^{n}\pth{\overline{r} \leq p} \geq \delta \}.$$
By definition of $F$, $[0, F(\overline{r}_n, n, \delta, \mathcal{P})$ is a confidence interval of level $1- \delta$. Hence if  $F(p,n, \delta, \mathcal{P}) \leq C_\gamma(p, n,\delta, \mathcal{P})$, then $[0, C_{\gamma}(\overline{X},n,\alpha)]$ is a confidence interval of level at least $1- \delta$.

To prove the inequality, consider a sequence $(m_i)_{i\in\mathbb{N}}$ such that
\begin{itemize}
\item $ p \leq m_i$,
\item $\forall i$, $m_i \in C(p,n, \delta)$,
\item $m_i\rightarrow F(p, n, \delta, \mathcal{P})$.
\end{itemize}

Since $F(p,n , \delta, \mathcal{P}) \geq p$, such a sequence always exists. Then for all $i$, there exists $\mu_i \in \mathcal{P}$ with mean $m_i$ such that $\mathbb{P}^n\pth{\overline{r}_n \leq m_i + p - m_i }\geq \delta$. Moreover, using the concentration inequality with $t = p-m_i$, it follows that $\gamma(m_i - p, m_i, n, \mathcal{P}) \geq \mathbb{P}^n\pth{\overline{r}_n \leq m_i + p - m_i }$. Hence $\gamma(m_i - p, m_i, n, \mathcal{P}) \geq \delta$ for all $i$. By definition of $C_\gamma(r, b, \delta, \mathcal{P})$, this implies the inequality.
\end{proof}
The quantity $F(p,n, \delta, \mathcal{P})$ introduced in the bound is the tightest upper bound on the average which can be inferred using the empirical average; the proof consists of showing that any concentration inequality results in a looser bound. The best concentration inequality of the form studied here should achieve this bound.

We will now consider the setting of bounded risks ($\mathcal{P}$ are probability measures on $[0,1]$), and use an extended version of Hoeffding's lemma to obtain a test bound. This extended version was in fact also established in Hoeffding's original paper \citep{Hoeffding1963}, and improves on the bound by considering the variance. This has a large impact when the empirical mean is small, since bounded random variables with average close to the bounds must have small variance. Notably, this results in test bounds with rate $1/n$ when the empirical mean is $0$ (compared to $1/\sqrt{n}$ for the classic Hoeffding's bound).

\begin{lemma}[Lemma 1 in \protect\citealp{Hoeffding1963}]
\label{app2:better_hoeffding}
Let $X$ be a random variable distributed from $\mathbb{P}$ taking value between $a$ and $b> a$. Then $\forall \lambda$,

\begin{equation}
\mathbb{E}\bk{\exp(\lambda X)} \leq \Psi_\lambda\pth{a + (b-a)\mathcal{B}\pth{\frac{\mathbb{E}\pth{X}-a}{b-a}}}
\end{equation}
where $\Psi_\lambda$ is the moment generating function evaluated at $\lambda$, i.e.
\begin{equation}
\Psi_\lambda\pth{a + (b-a)\mathcal{B}\pth{\frac{\mathbb{E}\pth{X}-a}{b-a}}} = \frac{\mathbb{E}[X]-a}{b-a}\exp(\lambda a) + \frac{b - \mathbb{E}[X]}{b-a}\exp(\lambda b) 
\end{equation}
\end{lemma}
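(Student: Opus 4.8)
The plan is to obtain the inequality as a short consequence of the convexity of the exponential, followed by linearity of the expectation. Fix $\lambda\in\mathbb{R}$ and note that $t\mapsto e^{\lambda t}$ is convex on $\mathbb{R}$, whatever the sign of $\lambda$. For every $x\in[a,b]$ we can write $x$ as the convex combination
\[
x \;=\; \frac{x-a}{b-a}\,b \;+\; \frac{b-x}{b-a}\,a\,,
\]
and convexity of $t\mapsto e^{\lambda t}$ then gives the pointwise bound
\[
e^{\lambda x}\;\le\;\frac{x-a}{b-a}\,e^{\lambda b} \;+\; \frac{b-x}{b-a}\,e^{\lambda a}\,.
\]
Since $X$ takes values in $[a,b]$ almost surely under $\mathbb{P}$, this holds for (almost) every realisation of $X$.

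The second step is to integrate both sides with respect to $\mathbb{P}$. The right-hand side above is affine in $x$, so taking expectations and writing $\mu:=\mathbb{E}[X]$ (which exists, $X$ being bounded) yields
\[
\mathbb{E}\bigl[e^{\lambda X}\bigr]\;\le\;\frac{\mu-a}{b-a}\,e^{\lambda b} \;+\; \frac{b-\mu}{b-a}\,e^{\lambda a}\,.
\]
It then only remains to identify the right-hand side with the claimed expression: it is exactly the moment generating function, evaluated at $\lambda$, of the two-point, $\{a,b\}$-valued random variable $a+(b-a)\mathcal{B}\bigl(\tfrac{\mu-a}{b-a}\bigr)$, i.e.\ of a rescaled Bernoulli variable sharing the mean $\mu$ of $X$. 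This is precisely $\Psi_\lambda\bigl(a+(b-a)\mathcal{B}(\tfrac{\mu-a}{b-a})\bigr)$, so the proof is complete.

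I do not foresee any genuine obstacle: the argument is just Jensen/convexity. The only points deserving care are that the interpolation inequality must be applied \emph{pointwise on the sample space} before integrating (so that only $\mathbb{E}[X]$, and no higher moment, enters after taking the expectation), and that it is valid for all $\lambda\in\mathbb{R}$, not merely $\lambda>0$, because the convexity of $e^{\lambda\,\cdot}$ is insensitive to the sign of $\lambda$. As a side remark, equality holds exactly when $X$ is supported on $\{a,b\}$, so among all $[a,b]$-valued laws with mean $\mu$ the rescaled Bernoulli maximises $\mathbb{E}[e^{\lambda X}]$ for every $\lambda$; this extremality is what makes the refinement over the crude Hoeffding bound genuinely useful when $\mu$ is close to $a$ (here, when the empirical risk is small), and it is the property exploited in the surrounding test-bound discussion.
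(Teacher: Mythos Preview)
Your proof is correct and is exactly the standard convexity argument from Hoeffding's original paper; the present paper does not actually give its own proof of this lemma but simply cites it, so there is nothing further to compare. As an aside, note that the display for $\Psi_\lambda$ in the paper has the exponents swapped: the correct expression is the one you derived, namely $\tfrac{b-\mu}{b-a}\,e^{\lambda a}+\tfrac{\mu-a}{b-a}\,e^{\lambda b}$.
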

Using the Chernov bound strategy used in the original Hoeffding's theorem and replacing Hoeffding's lemma with this improved version yields the following concentration inequality.
\begin{corollary}
\label{corol:appendix_ch2:varhoeffdingconcentration}
For $\mathbb{P}$ a measure on $[0,1]$ of mean $\overline{r}$, noting $\overline{r}_n$ the empirical mean for $n$ i.i.d. observations, it follows that
\begin{align*}
\mathbb{P}^n\pth{\overline{r}_n\leq  \overline{r}-t}&\leq \inf_{\lambda} \exp\pth{-n \lambda t + n \Psi_\lambda\pth{\mathcal{B}\pth{\overline{r}}}}\\
&=\exp\pth{-n \pth{(\overline{r} - t)\log\pth{\frac{\overline{r} - t}{\overline{r}}} + (1- \overline{r} + t)\log\pth{\frac{1- \overline{r} + t}{1- \overline{r}}}}}.
\end{align*}
\end{corollary}

The test bounds proposed in table \Cref{table:test_bounds_vs_PACBayes} are computed using the test bounds derived from the concentration inequality of \Cref{corol:appendix_ch2:varhoeffdingconcentration} using the concentration to test bound algorithm described in \Cref{lem:appendix_ch2:conc_to_test_bounds}. The inversion is performed using dichotomy. We remark that the slightly tighter test bounds could be obtained in the classification setting by limiting the space $\tilde{P}$ to Binomial distributions and explicitly computing the tightest confidence interval achievable in that setting.

\begin{table}[htbp]
\small
\caption{Comparison of the generalisation guarantees obtained using PAC-Bayes with a data dependent prior and test bounds on the prior mean. Lines Spambase, Bioresponse, Har, Electricity, Mammography and MNIST 1 come from \cite{perezortiz2021learning} (table 2, without validation). Lines MNIST FCN and MNIST MCN corresponds to FCN and CNN architecture respectively, trained using criteria $f_{\text{quad}}$ from table 1 in \cite{perez-ortiz2021a}. Lines CIFAR-nL-p come from table 5 in the same source, with nL indicating the number of layers and p the fraction of data used to train the prior. The objective function considered is the one resulting in the tightest PAC-Bayes bound. In every case, the loss considered is the classification error. The PAC-Bayes bound and test bound are computed for a confidence level of $0.965$.}
\vspace{1em}
\centering
\begin{tabular}{|l|c|c|c|c|}
\hline
Database & PAC-Bayes bound & Test bound & Test score & $n_{\text{valid}}$\\
\hline
Spambase     &0.140  &\textbf{0.0941}&0.077 &1840\\
Bioresponse  &0.318  &\textbf{0.291}&0.261 &1500\\
Har          &0.035  &\textbf{0.0307}&0.024 &4119\\
Electricity  &\textbf{0.223}  &0.2290&0.221 &18124\\
Mammography  &0.022  &\textbf{0.0202}&0.015 &4473\\
MNIST 1      &0.034  &\textbf{0.0274}&0.025 &30000\\
MNIST FCN    &0.0279 &\textbf{0.0224}&0.0202&30000 \\
MNIST CNN    &0.0155 &\textbf{0.0120}&0.0104&30000\\
CIFAR-9L-50  &0.2901 &\textbf{0.2583}&0.2518&30000\\
CIFAR-9L-70  &0.2377 &\textbf{0.2249}&0.2169&18000\\
CIFAR-13L-50 &0.2127 &\textbf{0.1973}&0.1914&30000\\
CIFAR-13L-70 &0.1758 &\textbf{0.1649}&0.1578&18000\\
CIFAR-15L-50 &0.1954 &\textbf{0.1744}&0.1688&30000\\
CIFAR-15L-70 &0.1667 &\textbf{0.1560}&0.1490&18000\\
\hline
\end{tabular}
\label{table:test_bounds_vs_PACBayes}
\end{table}
\clearpage
\FloatBarrier
\section{Numerical evaluations of the prior requirements}
\label{app:graphs_catoni}
We evaluate the prior requirements of Catoni's bound obtained in \Cref{sec:Cat_bound_prior_requirement} for a MNIST like setting. The number of samples $n$ is set to 60 000, and the confidence level is chosen, as in \cite{perezortiz2021learning} to 0.035. The target generalisation level is set to $0.015$ (1.5\% misclassification), similar to the PAC-Bayes generalisation guarantee reported in this \cite{perez-ortiz2021a}. In \Cref{figure:Q_opt_vs_Q_asympt_opt}, we compare the temperature free quantile requirement $\overline{Q}_{\textup{Cat}}$ to the quantile requirement at the temperature obtaining the lowest asymptotic quantile requirement, $\overline{Q}_{\textup{Cat}, \lambda_{\textup{opt}}}$. In \Cref{figure:Q_opt_vs_Q_asympt_opt}, we compare $\overline{Q}_{\textup{Cat}}$ to the quantile requirement for temperatures varying between $\lambda_{\textup{min}}$ and $\lambda_{\textup{max}}$.

\begin{figure}[htpb]
\centering
\includegraphics[width=\linewidth]{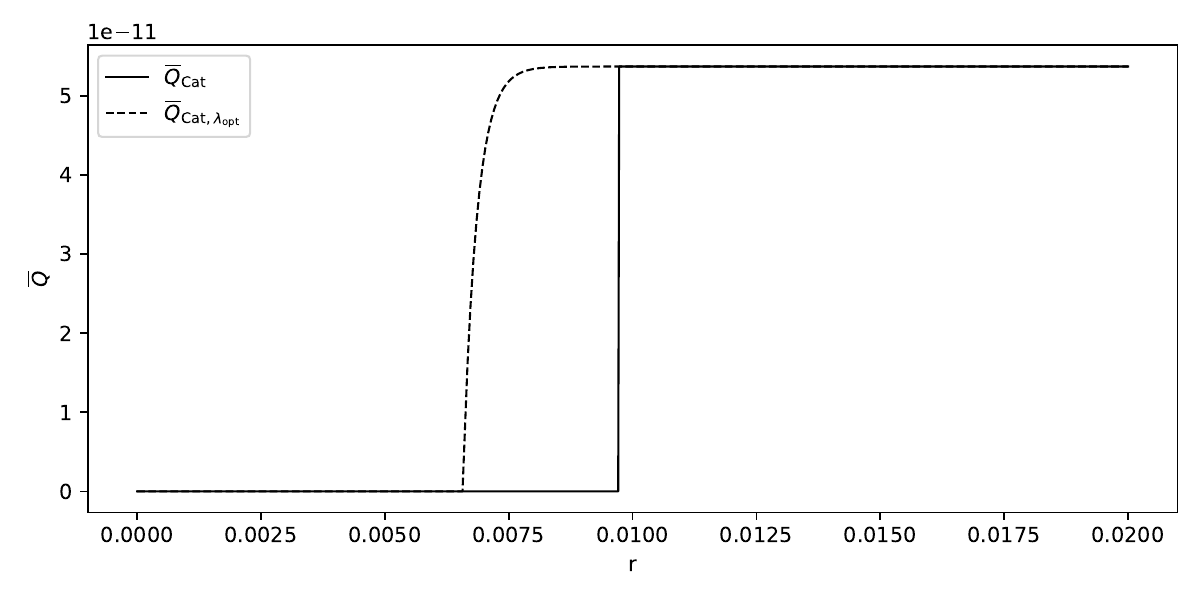}
\caption{Comparison of $\overline{Q}_{\textup{Cat}, \lambda_{\textup{opt}}}$ and $\overline{Q}_{\textup{Cat}}$. The target generalisation gap is fixed to $G=0.015$, the number of observations to 60 000 and the confidence level to $1 - \delta = 1- 0.035$. The temperature free requirement $\overline{Q}_{\textup{Cat}}$ exhibits a phase transition at \ensuremath{r_{\textup{thresh}} = G - 2\sqrt{-\log(\delta)/\pth{8n}} \simeq 0.009714}. While $\overline{Q}_{\textup{Cat},\lambda_{\textup{opt}}}$ provides a good approximation of the point wise minimum for large value of $r$, it leads to more restrictive quantile requirements for $r<r_{\textup{thresh}}$. This graph implies that no prior putting less than 5.3e-11 mass on predictors with risk smaller than 0.01 can hope to obtain a generalisation guarantee higher than 0.015 valid with confidence level of 0.965 by training Catoni's bound on datasets with 60 000 samples (such as MNIST).}
\label{figure:Q_opt_vs_Q_asympt_opt}
\end{figure}

\begin{figure}[htbp]
\centering
\includegraphics[width=\linewidth]{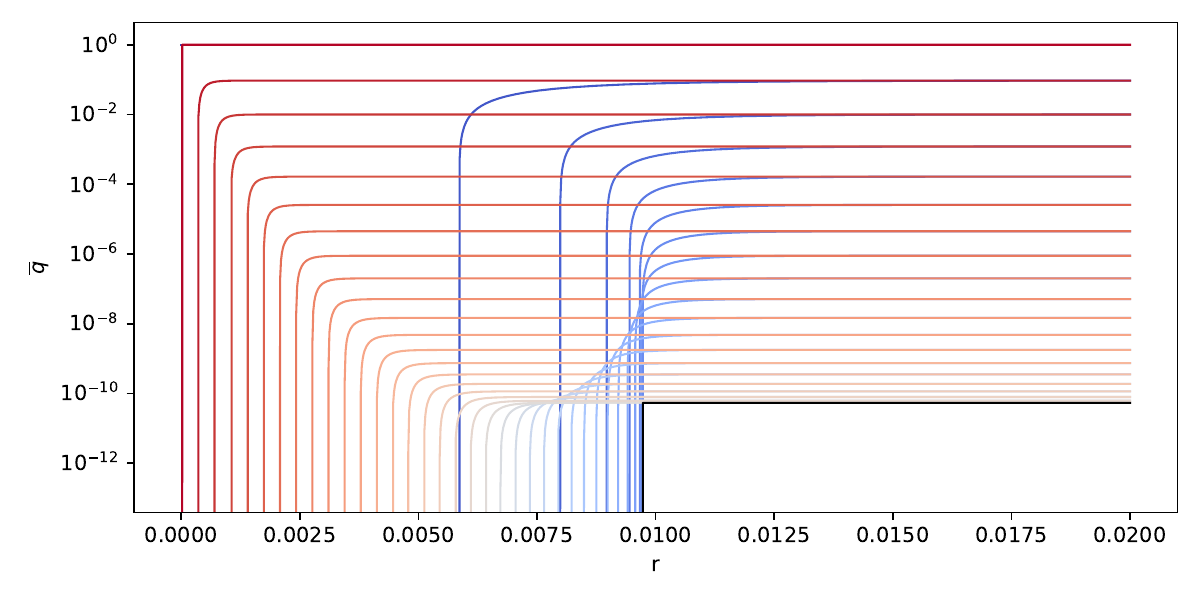}
\caption{Evaluation of \ensuremath{\overline{Q}_{\textup{cat},\lambda}} as a function of $r$ for different temperatures. The target generalisation gap is fixed to \ensuremath{G=0.015}, the number of observations to 60 000 and the confidence level to \ensuremath{1 - \delta = 1- 0.035}. 40 temperatures between \ensuremath{\lambda_{\textup{min}}} and \ensuremath{\lambda_{\textup{max}}} are assessed (blue denotes lower temperature, red larger temperature). In black, the minima of the risk requirement over all temperatures is plotted.}
\label{figure:Q_catoni}
\end{figure}

\clearpage
\FloatBarrier
\section{Strength of the quantile requirements}
\label{app:strength_quant_req}
The prior quantile conditions resulting from \cref{corol:prior_condition_protocol} are typically hard to interpret as they involve extremely low quantities (\emph{e.g.} 1e-10). As such, it can be difficult to decide whether such conditions are really restrictive, or virtually always met. We provide here some insights on the probability for an uninformed prior to draw low risk predictors, based on a simplified classification setting.

We consider a classification setting with $k$ classes. We assume that each of these classes are decomposed into $p$ sub-classes, called clusters. The assumption that the prior is uninformed can be interpreted as an assumption that the prior distribution is invariant to class permutation. For simplicity's sake, we here assume that the prior distribution draws for each cluster a class at uniformly at random. We make the further assumption that each cluster contains the same number of data.

The above sets of assumption form an optimistic setting, as we assume that the prior considers only perfect classification partitions at the cluster level. On the other hand, it also considers that clusters belonging to the same class are no more similar than clusters belonging to different class.

In this setting, the distribution of the error follows a Binomial distribution with probability of success (\emph{i.e.} misclassification) $\frac{k-1}{k}$ and $k \times p$ draws. The probability of observing an error less than $r <\frac{1}{2}$ can be either exactly computed or upper bounded by 
$$\prior\pth{\risk \leq r}\leq \exp\left(\log(rk p) + r k  p\log(k p) - (1-r) k p \log(k)\right).$$

The probability that the uninformed prior obtains less than $r= 0.015$ error (\emph{i.e.} less than 1.5\% of misclassification) is, in the 10 classes setting with 2 cluster per class, less than 1.47 e-20. This decreases to less than 4.7 e-96 when considering 10 cluster per class, and 2.45 e-378 when considering 40 cluster per class. The probability of obtaining an error smaller than $0.015$ for various number of classes and number of cluster per class is represented in \Cref{figure:quantile_uninformed_prior}. These indicate that even such low prior mass requirements of order less than $1e-300$ might be hard to meet for uninformed priors, especially as the number of classes increases.

\begin{figure}[htbp]
\centering
\includegraphics[width=\linewidth]{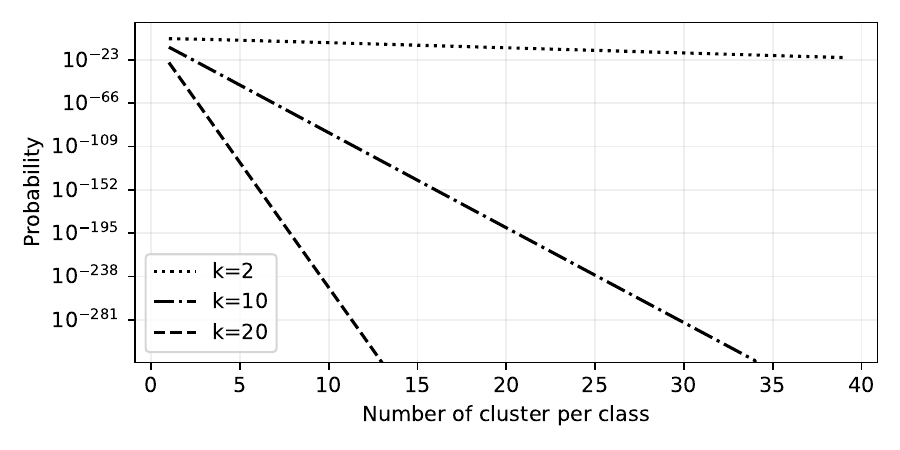}
\caption{Evaluation of \ensuremath{\prior\pth{\risk \leq 0.015}} in the uninformed prior setting for varying number of class and number of clusters.}
\label{figure:quantile_uninformed_prior}
\end{figure}






\end{document}